\title{From Past to Future: Rethinking  Eligibility Traces}
\author{
    %Authors
    % All authors must be in the same font size and format.
    Dhawal Gupta\textsuperscript{\rm 1}\footnote{Corresponding author},
    Scott M. Jordan\textsuperscript{\rm 2},
    Shreyas Chaudhari\textsuperscript{\rm 1}, \\
    Bo Liu\textsuperscript{\rm 3},
    Philip S. Thomas\textsuperscript{\rm 1}, 
    Bruno Castro da Silva\textsuperscript{\rm 1}
    % Written by AAAI Press Staff\textsuperscript{\rm 1}\thanks{With help from the AAAI Publications Committee.}\\
    % AAAI Style Contributions by Pater Patel Schneider,
    % Sunil Issar,\\
    % J. Scott Penberthy,
    % George Ferguson,
    % Hans Guesgen,
    % Francisco Cruz\equalcontrib,
    % Marc Pujol-Gonzalez\equalcontrib
}
\title{My Publication Title --- Single Author}
\author {
    Author Name
}
\title{My Publication Title --- Multiple Authors}
\author {
    % Authors
    First Author Name,\textsuperscript{\rm 1}
    Second Author Name, \textsuperscript{\rm 2}
    Third Author Name \textsuperscript{\rm 1}
}
\theoremstyle{plain}
\newtheorem{theorem}{Theorem}[section]
\newtheorem{lemma}[theorem]{Lemma}
\theoremstyle{definition}
\newtheorem{assumption}[theorem]{Assumption}
\theoremstyle{remark}
\definecolor{dark-red}{rgb}{0.4,0.15,0.15}
\definecolor{dark-blue}{rgb}{0,0,0.7}
\let\oldnl\nl% Store \nl in \oldnl
\newcommand{\nonl}{\renewcommand{\nl}{\let\nl\oldnl}}% Remove line number for one line
\newcommand{\overleftrightsmallarrow}{\mathpalette{\overarrowsmall@\leftrightarrowfill@}}
\newcommand{\overrightsmallarrow}{\mathpalette{\overarrowsmall@\rightarrowfill@}}
\newcommand{\overleftsmallarrow}{\mathpalette{\overarrowsmall@\leftarrowfill@}}
\newcommand{\overarrowsmall@}[3]{%
  \vbox{%
    \ialign{%
      ##\crcr
      #1{\smaller@style{#2}}\crcr
      % \noalign{\nointerlineskip}%
      % \noalign{\vskip -1pt}
      \noalign{\nointerlineskip \vskip 1pt}
      $\m@th\hfil#2#3\hfil$\crcr
    }%
  }%
}
\def\smaller@style#1{%
  \ifx#1\displaystyle\scriptscriptstyle\else
    \ifx#1\textstyle\scriptscriptstyle\else
      \scriptscriptstyle
    \fi
  \fi
}
\newcommand{\RV}{\overleftsmallarrow{v}}
\newcommand{\FV}{\overrightsmallarrow{v}}
\newcommand{\BV}{\overleftrightsmallarrow{v}}
\newcommand{\RG}{\overleftsmallarrow{G}}
\newcommand{\Ppi}{\P^\pi}
\newcommand{\RPpi}{\overleftsmallarrow{\Ppi}}
\newcommand{\dpi}{d^\pi}
\newcommand{\rpi}{r^\pi}
\newcommand{\Rrpi}{\overleftsmallarrow{\rpi}}
\newcommand{\RT}{\overleftsmallarrow{\T}}
\newcommand{\BT}{\overleftrightsmallarrow{\T}}
\newcommand{\FPpi}{\overrightsmallarrow{\Ppi}}
\newcommand{\Rdelta}{\overleftsmallarrow{\delta}}
\newcommand{\Bdelta}{\overleftrightsmallarrow{\delta}}
\def\Re{\mathbb{R}}
\def\Nat{{\rm I\kern\pIR N}}
\def\argmax{\mathop{\rm arg\,max}}
\newcommand{\EE}[1]{\exptE\left[#1\right]}
\def\A{{\mathcal{A}}}
\def\P{{\mathcal{P}}}
\def\S{{\mathcal{S}}}
\def\T{{\mathcal{T}}}
\def\vec0{{\boldsymbol{0}}}
\newcommand{\beq}{\begin{equation}}
\newcommand{\eeq}{\end{equation}}
\newcommand{\beqa}{\begin{eqnarray}}
\newcommand{\eeqa}{\end{eqnarray}}
\newcommand{\beqan}{\begin{eqnarray*}}
\newcommand{\eeqan}{\end{eqnarray*}}
\newcommand{\ben}{\begin{eqnarray*}}
\newcommand{\een}{\end{eqnarray*}}
\renewcommand{\EE}[2]{\mathbb{E}_{#1\!\!}\left[#2\right]}
\begin{document}

\maketitle

\begin{abstract}
In this paper, we introduce a fresh perspective on the challenges of credit assignment and policy evaluation.
First, we delve into the nuances of eligibility traces and explore instances where their updates may result in unexpected credit assignment to preceding states. 
From this investigation emerges the concept of a novel value function, which we refer to as the \emph{bidirectional value function}.
Unlike traditional state value functions, bidirectional value functions account for both future expected returns (rewards anticipated from the current state onward) and past expected returns (cumulative rewards from the episode's start to the present).
We derive principled update equations to learn this value function and, through experimentation, demonstrate its efficacy in enhancing the process of policy evaluation. In particular, our results indicate that the proposed learning approach can, in certain challenging contexts, perform policy evaluation more rapidly than TD($\lambda$)---a method that learns forward value functions, $v^\pi$, \emph{directly}. Overall, our findings present a new perspective on eligibility traces and potential advantages associated with the novel value function it inspires, especially for policy evaluation.
%Overall, our findings present a new perspective on eligibility traces and potential advantages to the novel value function it inspires, especially for policy evaluation. 
\end{abstract}

\section{Introduction}\label{sec:introduction}

Reinforcement Learning (RL) offers a robust framework for tackling complex sequential decision-making problems. The growing relevance of RL in diverse applications—--from controlling nuclear reactors~\citep{radaideh2021physics,park2022control} to guiding atmospheric balloons~\citep{bellemare2020autonomous} and optimizing data centers~\citep{li2019transforming}—--
underscores the need for more efficient solutions. Central to these solutions is addressing the \textit{credit assignment problem}, which involves determining which actions most contributed to a particular outcome---a key challenge in sequential decision-making. The outcome of actions in RL may be delayed, posing a challenge in correctly attributing credit to the most relevant prior states and actions. This often requires a large number of samples or interactions with the environment. In real-world settings, this might be a constraint as system interactions are often risky or costly. Minimizing the sample and computational complexity of existing algorithms not only addresses these challenges but also facilitates the wider adoption of RL in various future applications.

% Doing better credit assignments in RL become increasingly more important as systems trained using RL methods are being deployed in Real World. People are using RL to control nuclear reactors, atmospheric balloons, and data centers. In this case, it becomes even more increasingly important to be able to correctly and adeptly attribute the result of different events to the states that are encountered in the past. Interactions with real systems are expensive, and being able to reduce computational complexity stands to benefit all future applications that we encounter in the future. 

% Here i have one possible starting
Addressing the credit assignment problem given a temporal sequence of states and actions has been and continues to be an active area of research. A key concept in addressing this challenge is the idea of Temporal Difference (TD) methods~\citep{sutton1984temporal}. 
In the context of TD methods, the \emph{backward view} \cite{sutton2018introduction} offers an intuitive approach under which the agent aims to adjust the value of previous states to align with recent observations. 
As depicted in Figure \ref{fig:td_lambda_intuition}(a), observing a positive outcome at a given time step, $t$, prompts the agent to increase the value estimates of all preceding states. 
This adjustment incorporates a level of discounting, accounting for the diminishing influence of distant preceding states on the observed outcome. This approach relies on the assumption that every prior state should share credit for the resultant outcome. One advantage of the backward view, discussed later, is that it can be efficiently implemented in an online and incremental manner.

% Coming up with different ways to perform credit assignment in the temporal sequences of states and actions has been an active area of research, and has been usually tried to solve via Temporal Difference methods \cite{sutton1984temporal}. Backward view \cite{sutton2018introduction} is seen as an intuitive way to view and solve the aforementioned problem. In the backward view, the agent aims to update the value of all previous states in a way that agrees with the event seen at a current time step. Figure \ref{fig:td_lambda_intuition} (a) provides the intuition behind the same, where observing a positive outcome at the current time step (i.e., $t$) should lead us to update the value of previous states in the positive direction, albeit with some amount of discounting to factor in the possibility of temporally farther of states contributing less to the possible outcome. The idea stems from the intuition that all states in history should be attributed with the credit of leading up to the given outcome, and the strength of the backward view (as we will see later), lies in the possibility of it being implemented online and incrementally.  
\begin{figure}[!t]
    \centering
    \includegraphics[width=0.95\linewidth]{./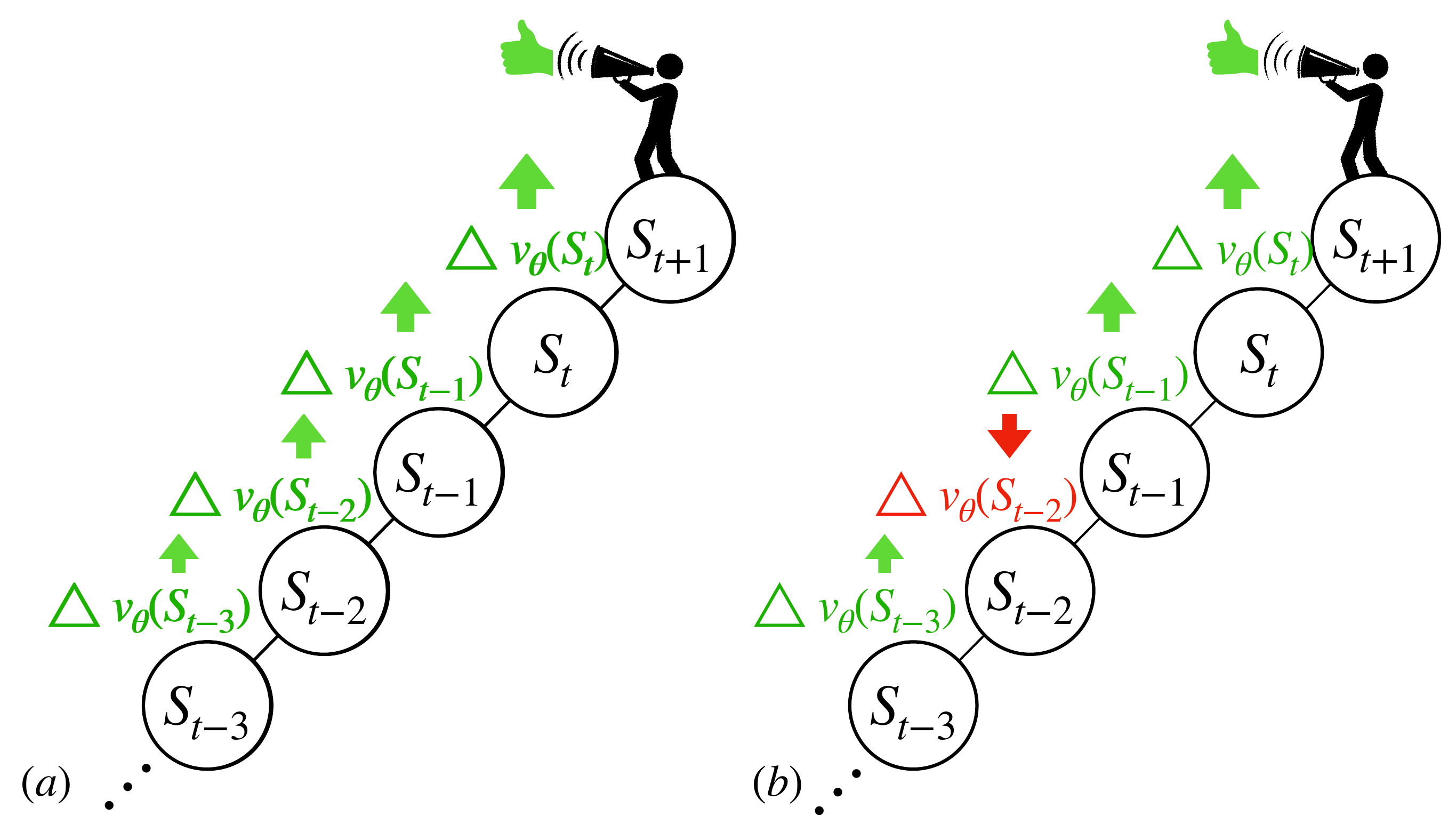}
    \caption{
    % Backward view implemented by TD($\lambda$),  adapted from \citet{sutton2018introduction}. Note the arrow sizes provide the magnitude of the update (\textbf{a}) The intended update direction to the value of past states in the backward view i.e. TD($\lambda$). (\textbf{b}) The possible updates that can occur for past states when using non-linear function approximations for value function parameterization.
    Implementation of the backward view via TD($\lambda$), adapted from \citet{sutton2018introduction}. Arrow sizes denote the magnitude of updates: \textbf{(a)} Expected update direction of past state values under the backward view, or TD($\lambda$). \textbf{(b)} 
    Potential misalignment in updates due to reliance on outdated gradient memory for past states (especially when using non-linear function approximation), which deviated from (a).} 
    % leading to deviations from the conceptual model outlined in (a).
    
    %Potential misaligned updates due to aging gradient memory for past states when employing non-linear function approximations, deviating from the conceptual understanding in (a).
    % because of using non-linear function approximations. }
    \label{fig:td_lambda_intuition}    
\end{figure}

The TD($\lambda$) method is a widely adopted algorithmic implementation of the backward view~\cite{sutton1984temporal}. Initially introduced for settings involving linear parameterizations of the value function, it has become standard in policy evaluation RL problems. However, as the complexity of RL problems increases with novel applications, there has been a  shift towards adopting neural networks as function approximators, primarily due to their flexibility to represent many functions. 
% enhanced generalization capabilities. 
Yet, this shift is not without challenges. In our work, we underscore one particular issue: when deploying TD($\lambda$) with nonlinear function approximators, particular settings might cause it to update the value of previous states in ways inconsistent with the standard expectations regarding its behavior,  that run counter to the core intuition underlying the backward view (Figure \ref{fig:td_lambda_intuition}\textbf{(a)}). Figure \ref{fig:learning} illustrates such a scenario, where after observing a \textit{positive} outcome, the value of some prior states are \emph{decreased} rather than increased. The direction of these updates, therefore, is contrary to the intended or expected ones.

In a later section, we delve deeper into why this issue with TD($\lambda$) arises. Intuitively, the problem lies in how TD($\lambda$) relies on an ``eligibility trace vector'': a short-term memory of the gradients of previous states' value functions. This trace vector is essentially a moving average of gradients of previously visited states---and it is used by TD($\lambda$) to update the value of multiple states simultaneously. However, as the agent continuously updates state values online, such average gradients can become outdated.

\textbf{In the policy evaluation setting with non-linear function approximation, gradient memory vector maintained by TD($\lambda$) can become outdated, which can pose challenges, leading to state value updates that are misaligned with the intended behavior of the backward view.} As a result, past states might receive updates that do not align with our intentions or expectations. Importantly, this issue does \textit{not} occur under linear functions. This is because, in the linear setting, trace vectors equate to fixed-state feature vectors.
% , which do not depend on the value function's parameters. These features remain constant during the learning process, as they are computed using a fixed set of non-adaptable functions.

The contributions of this paper are as follows:  
\begin{itemize}
    \item We present a novel perspective under which eligibility traces may be investigated, highlighting specific scenarios that may lead to unexpected credit assignments to prior states.
    
    \item Stemming from our exploration of eligibility traces, we introduce  \emph{bidirectional value functions}. This novel type of value function captures both future and past expected returns, thus offering a broader perspective than traditional state value functions.
    
    \item We formulate principled update equations tailored for learning bidirectional value functions while emphasizing their applicability in practical scenarios.
    
    \item Through empirical analyses, we illustrate how effectively bidirectional value functions can be used in policy evaluation. Our results suggest that the methods proposed can outperform the traditional TD($\lambda$) technique, especially in settings involving complex non-linear approximators.
\end{itemize}

\section{Background \& Motivation}

\subsection{Notation and introduction to RL}
Reinforcement learning is a framework for modeling sequential decision-making problems where an agent interacts with the environment and learns to improve its decisions based on its previous actions and the rewards it receives. The most common way to model such problems is as a Markov Decision Process (MDP), defined as a tuple  $(\S, \A, P, R, d_0, \gamma)$, where $\S$ is the state space; $\A$ is the action space; $P(S_{t+1}=s' | S_t=s, A_t=a)$ is a transition function describing the probability of transitioning to state $s'$ given that the agent was in state $s$ and executed action $a$; $R(S_t,A_t)$ is a bounded reward function; $d_0(S_0)$ is the starting state distribution; and $\gamma$ is the discount factor. For ease of discussion, we also consider the case where \textit{state features} may be used, e.g., to perform value function approximation. In this case, we assume a domain-dependent function $x:\S \rightarrow \Re^d$ mapping states to corresponding $d$-dimensional feature representations. The agent behaves according to a policy denoted by $\pi$. In particular, while interacting with the environment and observing its current state at time $t$, $S_t$, the agent stochastically selects an action $A_t$ according to its policy $\pi: \S \to \Delta(\A)$, where $\Delta(\A)$ is the probability simplex over the actions; i.e., $A_t \sim \pi(S_t)$.
% as $A_t \sim \pi(\cdot | S_t)$. 
After executing the action, the agent observes a reward value $R(S_t, A_t)$ and transitions to the next state, $S_{t+1}$. 
% We denote the combined transition dynamics of $P, \pi$ as $P^\pi$, where 
The goal of an RL agent is to find the policy $\pi^\star$ that maximizes the expected discounted sum of the rewards generated by it:
\begin{align*}
    \pi^\star \in \argmax_{\pi}\EE{\pi}{\sum_{t=0}^{\infty} \gamma^t R(S_t, A_t)  }.
\end{align*}

\begin{figure*}[t]
    \centering
    \includegraphics[width=\textwidth]{./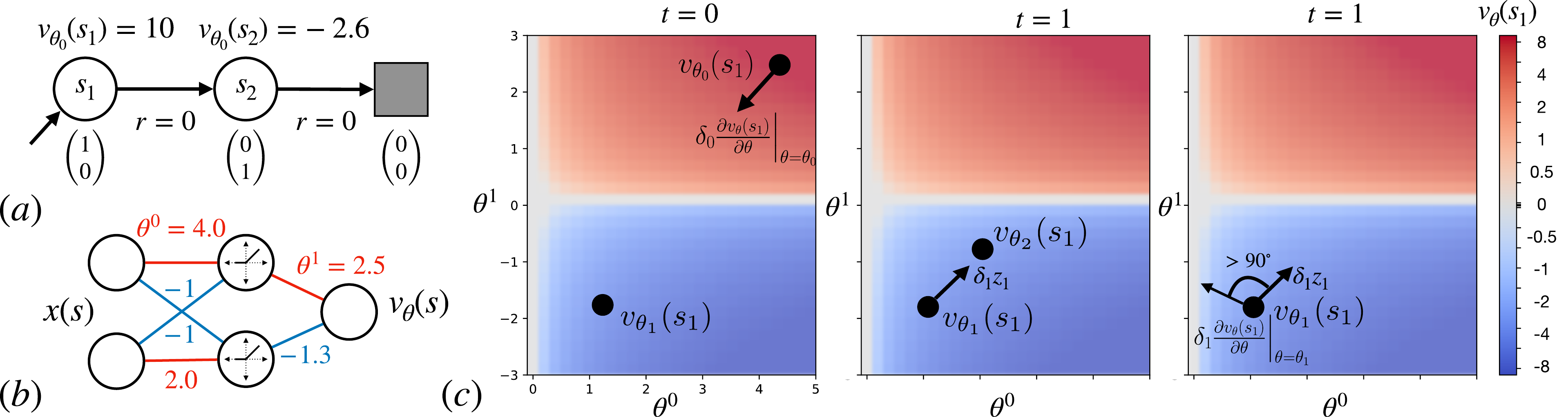}
    \caption{\textbf{(a)} A simple 2-state MDP. States are annotated with corresponding feature representations and values according to the approximator at time $t=0$. \textbf{(b)} Functional form of the value function and respective parameters at $t=0$. \textbf{(c)} On the \textbf{(Left)}, a surface depicting the value of state $s_1$ for different parameter values ($\theta^0$ and $\theta^1$). After an update from $t=0$ to $t=1$, we can see \textbf{(Middle)} the update direction given by TD($\lambda$) at $t=1$ (i.e., $\delta_1 z_1$). On the \textbf{(Right)}, we see that  TD($\lambda$)'s update direction is obtuse (and hence not aligned) to the direction based on the gradient of the \textit{current} value function (i.e., $\delta_1 \tfrac{\partial v_\theta(s_1)}{\partial \theta} |_{\theta = \theta_{1}}$). 
    % They do not align because of TD($\lambda$)'s use of outdated gradients.
    }
    %which produces the update akin to Figure \ref{fig:td_lambda_intuition} \textbf{(b)}.}
    %   We can notice that if we might use the stale gradient, we can move in the wrong direction. The direction of update at time step $t=1$, $\delta_1 \nabla_{\theta_1} v_\theta(s_1)$ is obtuse from $\delta_1 z_1$.}
    \label{fig:learning}
\end{figure*}

\subsection{Policy Evaluation}
The search for $\pi^\star$ is often made easier by being able to  
% the above problem entails being 
predict the future returns of a given policy---this is known as the \textit{policy evaluation} problem. Return at time $t$ is defined as $G_t \coloneqq R_{t} + \gamma R_{t+1} + \gamma^2 R_{t+2 } + \ldots$, where $R_t \coloneqq R(S_t, A_t)$. We define the value function for a state $s$ as the expected return observed by an agent starting from state $s$ following policy $\pi$, i.e., $v^\pi(s) = \EE{\pi}{G_t | S_t = s}$.
% \begin{align*}
%     v^\pi(s) = \EE{\pi}{G_t | S_t = s}.
% \end{align*}

Estimating $v^\pi$ (i.e., performing policy evaluation) is an important subroutine required to perform \textit{policy improvement}. It is often done in settings where the value function is approximated as a parametric function with parameters $\theta$, $v_\theta(s) \approx v^\pi(s)$, where the weights are updated through an iterative process, i.e., $\theta_{t+1} = \theta_t + \triangle \theta_t$.
% \begin{align*}
%     \theta_{t+1} = \theta_t + \triangle \theta_t.
% \end{align*}
% with the aim of $v_\theta$ approximating $v^\pi$ as closely as possible. 
% \dhawal{Left here}
A common way of determining the update term, $\triangle \theta_t$, is by assuming an update rule of the form 
\begin{align*}
    \triangle \theta_t = \alpha ({\color{black} G_t} - v_{\theta_t}(S_t))   \dfrac{\partial v_\theta(S_t)}{\partial \theta} \Bigr|_{\theta = \theta_{t}},
\end{align*}

\noindent where $\alpha$ is a step-size parameter.
% and $U_t$ is some target which should represent the return that value of $S_t$ should regress to.
% The two simplest algorithms to learn the value function is Monte Carlo (MC) algorithm which uses $U_t = G_t$, the sameple of full return as the target, and temporal differences (TD) learning, i.e., $U_t = $
% where $U_t$ is some  approximtion of $G_t$, different methods are used to do, which are explained below. 
% \lb{Since MC just serves the baseline,shall we shorten its part? just write the core description of it.}
The two simplest algorithms to learn the value function are the Monte Carlo (MC) algorithm and the Temporal Differences (TD) learning algorithm. In the MC algorithm, updates are as follows
% $\triangle \theta_t = \alpha ( {\color{black} R_t + \gamma G_{t+1}} - v_{\theta_t}(S_t))  \dfrac{\partial v_\theta(S_t)}{\partial \theta} \Bigr|_{\theta = \theta_{t}}$,
\begin{align*}
    \triangle \theta_t = \alpha ( {\color{black} R_t + \gamma G_{t+1}} - v_{\theta_t}(S_t))  \dfrac{\partial v_\theta(S_t)}{\partial \theta} \Bigr|_{\theta = \theta_{t}},
\end{align*}
where determining $G_{t+1}$ requires that the agent wait until the end of an episode.
% MC is known to be unbiased, but have high variance. 
% % MC uses an unbiased estimate of $G_t$ and have known to have high variance. 
% MC is also called as a forward view algorithm as it relies on the whole episode to update the value functions -- because of reliance on variables in the future to update the current state.  
TD learning algorithms replace the $G_{t+1}$ term with the agent's current approximation or estimate of the return at the next step; i.e., 
% approximation oof the same based on the current approximations, i.e.,
\begin{align}\label{eq:td_v}
    \triangle \theta_t = \alpha ({\color{black} R_t + \gamma v_{\theta_t}(S_{t+1}) } - v_{\theta_t} (S_t)) \dfrac{\partial v_\theta(S_t)}{\partial \theta} \Bigr|_{\theta = \theta_{t}},
\end{align}
% \todo{The weights vectors used over here are not dependent on the time }
where $R_t + \gamma v_{\theta_t}(S_{t+1}) - v_{\theta_t} (S_t)$ is known as the TD error and denoted as $\delta_t$. 
% \todo{We probably need to mention the fact that the usual definition of things doesn't consider the update with $w_t$ weights.} 
% This update is known to have low variance but tends to be biased because of the approximation used for the return from the next state
An important characteristic of the latter update is its online nature: the agent does not need to wait till the episode ends, since it does not rely on future variables. Thus, updates can be performed after each time step. 

% \topicsentence{Talk about lambda returns}
The family of $\lambda$-return algorithms serves as an intermediary between Monte Carlo (MC) and one-step TD(0) algorithms, using a smoothing parameter, $\lambda$. 
% , to blend the update targets of these methods.
% \todo{Not defining the lambda returns exactly any more -- in terms of equation}
% estimate i.e., $\lambda$ returns are defined as 
% \begin{align*}
%     G^\lambda_t \coloneqq (1 - \lambda)\sum_{n=1}^{T-t-1}\lambda^{n-1}G_{t:t+n} + \lambda^{T-t-1}G_{t:T}
% \end{align*} 
% where $T$ is the horizon of the episode and $G_{t:t+n}$ is the n-step return estimate. The $\lambda$ return can be return in a recursive form as
% \begin{align*}
%     G^\lambda_t = R_{t} + \gamma ((1-\lambda)v_{w_t}(S_{t+1}) + \lambda G^\lambda_{t+1})
% \end{align*}

TD($\lambda$), which implements the backward view of the $\lambda$-return algorithm, relies only on historical information and can perform value function updates online. It accomplishes this by maintaining a trace vector $e_t$ (known as the eligibility trace) encoding a short-term memory summarizing the history of the trajectory up to time $t$. This trace vector is then used to assign credit to the previous states visited by the agent based on the currently observed reward. In particular, the update term at time $t$ is 
\begin{align}\label{eq:td_lambda}
    \triangle \theta_t = \alpha \delta_t e_t \text{, where } e_t := \sum_{i=0}^{t}  (\lambda \gamma )^{t - i}\frac{\partial v_\theta(S_i)}{\partial \theta} \Bigr|_{ \textcolor{red}{\theta = \theta_{i}}} . %\frac{d}{d w} v_\theta(S_t)
\end{align}
% \shortintertext{where}
% Below statement not reall erquired
% The particular version presented above is called accumulating traces.

Note that, when at state $S_t$, $e_t$ can be computed recursively as the running average of the value function gradient evaluated at the states visited during the current episode:
\begin{align}\label{eq:online_trace}
    e_t = \gamma \lambda e_{t-1} + \dfrac{\partial v_\theta(S_t)}{\partial \theta} \Bigr|_{\theta = \theta_{t}},
\end{align}
where $z_{-1}\!=\!0$; i.e., the trace is set to 0 at the start of episodes. 

Note that eligibility traces, as defined above, can be used to perform credit assignment over a single episode. To perform credit assignment over multiple possible episodes, \citet{hasselt2020expected} introduced an \textit{expected} variant of traces, $z(s)$, corresponding to a type of average trace over all possible trajectories ending at state $s$:
% to further improve credit assignment, i.e., 
\begin{align}\label{eq:expected_trace_old}
    z(s) \coloneqq \EE{}{e_t | S_t = s}.
\end{align}

% \subsection{Motivation}
 \subsection{Misconception with Eligibility Traces}
% \SC{Alternatives: Observation}
% \lb{This part is a bit long. Shall we shorten it to make it more concise and explicit?}

In TD($\lambda$), the backward view uses the term $\tfrac{\partial v_\theta(S_i)}{\partial \theta}$ as a mechanism to determine how the value of a previously encountered state, $S_i$, will be updated. For instance, given an observed TD error at time $t$, $\delta_t$, we may adjust the value of the state $S_{t-3}$  by using the stored derivatives from time $t-3$, i.e., $\tfrac{\partial v_\theta(S_{t-3})}{\partial \theta}|_{\color{black}\theta=\theta_{t-3}}$.% which can potentially be termed as stale at time $t$.
 % of the value for that state to modify the value of $S_{t-3}$.}
 Weight updates in TD($\lambda$)  are implemented by maintaining a moving average of the gradients of the value functions related to previously-encountered states (see Eq.~\eqref{eq:td_lambda}). In particular, this weighted average determines how values of multiple past states will be updated given the currently observed TD error.

Notice, however, that this moving average aggregates information, say, about the gradient of the value of state $S_{t-3}$ \textit{assuming the value function at that time}, i.e.,  it aggregates information about the gradient  $\tfrac{\partial v_\theta(S_{t-3})}{\partial \theta}|_{\color{black}\theta=\theta_{t-3}}$. This gradient, however, represents the direction in which weights should be updated (based on the currently-observed outcome, $\delta_t$) to update the \textit{old, no-longer-in-use} value function, $v_{\theta_{t-3}}$, not the  \textit{current} value function, $v_{\theta_t}$. The correct update direction based on the intuition in Figure \ref{fig:td_lambda_intuition}\textbf{(a)}, by contrast, should be that given by the gradient of the \textit{current} value function: $\tfrac{\partial v_\theta(S_{t-3})}{\partial \theta}|_{\color{red}\theta=\theta_{t}}$. 
%of, say,  rely on prior computation of the derivatives of the value at state $S_i$ for $i<t$ when updating the value of the state at time $t$. 
This indicates that the direction chosen by TD($\lambda$) to update the value of previously encountered states may not align with the correct direction according to the value function's current parameters, $\theta_t$, due to the use of \textbf{outdated gradients}. Mathematically, this discrepancy can be represented as (for $i>0$):
% \begin{align*}
%         {\frac{\partial V(S_{t-i})}{\partial \theta} \Bigr|_{ \textcolor{red}{\theta = \theta_{t-i}}}} \cdot \frac{\partial V(S_{t-i})}{\partial \theta} \Bigr|_{\textcolor{red}{\theta = \theta_t}} < 0 
% \end{align*}
\begin{align*}
        \Big({\frac{\partial v(S_{t-i})}{\partial \theta} \Bigr|_{ \textcolor{black}{\theta = \theta_{t-i}}}} \Big) ^\top
        \Big( \frac{\partial v(S_{t-i})}{\partial \theta} \Bigr|_{\textcolor{red}{\theta = \theta_t}}\Big) < 0 .
\end{align*}
Figure \ref{fig:learning} depicts a simple example to highlight this problem---i.e., the fact that TD($\lambda$) uses outdated gradients when performing updates. In this figure, we can see that the effective update direction of TD($\lambda$) points in the direction opposite to the intended/correct one. We discuss the learning performance of both types of updates in Appendix \ref{appendix:stale_gradients}.

Conceptually, TD($\lambda$) computes traces by combining previous derivatives of the value function to adjust the value of the state at time $i$, based on the TD error at time $t$, for $i < t$. Notably, this misconception was not present when TD($\lambda$) was introduced~\cite{sutton1984temporal}. In its original form, the update was tailored for linear functions.
In this case, the update is a function of the feature representation of each encountered state $S_i$, since $\tfrac{\partial v_\theta(S_i)}{\partial \theta} = x(S_i)$.
Then, the trace is a moving average of features observed in previous steps; update issues are averted since the feature vector of any given state remains the same, independently of changes to the value function.
% in weights.

Recall that equation~\eqref{eq:td_lambda} corresponds to the TD($\lambda$) update and that it highlights how it uses outdated gradients. A minor adjustment to this equation provides the desired update:
\begin{align} \label{eq:td_lambda_2}
    \triangle  \theta_t &=  \alpha \delta_t \tilde{e}_t \text{, where } \tilde{e}_t = \sum_{i=0}^{t} (\lambda \gamma )^{t - i}\frac{\partial v_\theta(S_i)}{\partial \theta} \Bigr|_{\textcolor{red}{\theta = \theta_{t}}} .
\end{align}
The key distinction is in using the latest/current weights, $\theta_t$ (highlighted in \textcolor{red}{red}), during the trace calculation for prior states. For linear function approximations, both \eqref{eq:td_lambda} and \eqref{eq:td_lambda_2} produce identical updates.

A key advantage of the original update \eqref{eq:td_lambda} is that it can be implemented online (in particular, via Eq.~\eqref{eq:online_trace}). This requires only a constant computational and memory cost per update step. In contrast, if we were to use Eq.~ \eqref{eq:td_lambda_2} to perform online updates, it would require computing the derivative for every state encountered up to time $t$. This makes the complexity of each update directly proportional to the episode's duration thus far. This computation becomes impractical for longer episodes. Furthermore, this approach negates from the principle of incremental updates, as the computational cost per step increases based on episode length.

Let us adapt the expected trace formulation~\eqref{eq:expected_trace_old} to our Eq.~\eqref{eq:td_lambda_2}. We start with an expected trace vector $\tilde{z}(s)\coloneqq \EE{}{\tilde{e}_t | S_t = s}$.\footnote{Note that \citet{hasselt2020expected} do not distinguish between $e_t$ and $\tilde{e}_t$. This difference is often overlooked with traces, and one contribution of our work is to point out this difference.} By substituting the value of $\tilde{e}_t$ into this expression and simplifying it, we obtain:
\begin{align}
    \EE{}{\tilde{e}_t | S_t = s} 
    &= \EE{}{\sum_{i=0}^{t} (\lambda \gamma )^{t - i}\frac{\partial v_\theta(S_i)}{\partial \theta} \Bigr|_{\textcolor{black}{\theta = \theta_{t}}} | S_t = s}\\
    &\underset{(a)}{=} \frac{\partial}{\partial \theta} \EE{}{ \sum_{i=0}^{t} \left((\lambda \gamma )^{t - i}  v_\theta(S_i)  \right)\Bigr| S_t=s} \Bigr|_{\theta= \theta_t}\\
    &\underset{(b)}{=}  \frac{\partial f(\theta, s)}{ \partial \theta}\Bigr|_{\theta = \theta_t}. \label{eq:trace_update_functional}
\end{align}
\noindent where $f(\theta, s) := \EE{}{ \sum_{i=0}^{t} \left((\lambda \gamma )^{t - i}  v_\theta(S_i)  \right)\Bigr| S_t=s} $. Step (a) uses linearity of expectation and gradients, and in (b) we define the inner term as a $f$. Notice that the resulting trace is the gradient of a function defined as the weighted sum of value approximations over different time steps.

% \lb{The visibility of the figure (the words inside the blue part) may not be good enough for near-sighted reviewers}

\section{Methodology}\label{sec:method} 
In the previous section, we showed that the expected trace update, when applied to Eq.~\eqref{eq:td_lambda_2}, is the gradient of a function composed of the expected discounted sum of the value of states as approximated by $v_\theta$. 
Let us consider what Eq.~\eqref{eq:trace_update_functional} would be at the point of convergence, $\theta^\star$, such that $v_{\theta^\star} = v^\pi$. At convergence, $f(\theta^\star, s)$ is 
\begin{align}
    f(\theta^\star, s) &= \EE{}{ \sum_{i=0}^{t} \left((\lambda \gamma )^{t - i}  v_{\theta^\star}(S_i)  \right)\Bigr| S_t=s} \\
    &= \EE{}{ \sum_{i=0}^{t} \left((\lambda \gamma )^{t - i}  v^\pi(S_i)  \right)\Bigr| S_t=s} .
\end{align}
Expanding the definition of $f$ at $\theta^\star$ leads us to Lemma \ref{eq:lemma1}. Prior to delving into the lemma, let us define another quantity, $\RG_t \coloneqq \sum_{i=1}^{t} (\lambda \gamma)^{i} R_{t-i}$. This represents the discounted sum of rewards collected up to the current time step, where discounting is in the reverse direction with a factor of $\lambda\gamma$.
% i.e., the discounted sum of reward collected upto the correct time step.

\begin{lemma}
\label{lem:valuefunc} 
% The expected sum of the value of states visited in trajectories starting in $S_0$ is:\textbf{OR} 
The discounted sum of the value function at the expected sequence of states in trajectories reaching state $s$ at time $t$ is
\begin{align}\label{eq:lemma1}
\scriptsize
     &\EE{}{\sum_{i=0}^{t} (\lambda \gamma )^{t - i} v^\pi(S_i) \Bigr| S_t = s}=\\
     &\quad\quad\quad\frac{1}{1 - \gamma^2\lambda}{\EE{}{ { G_t} +  {\RG_t} - \gamma(\lambda\gamma)^{t+1} {G_0} |S_t =s } } .\\
\end{align}
\end{lemma}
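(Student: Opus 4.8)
The plan is to rewrite the left-hand side entirely in terms of (expectations of) the scalar rewards $R_j$ conditioned on $S_t=s$, and then to reassemble those reward sums into the three return objects $G_t$, $\RG_t$, and $G_0$. First I would substitute the definition $v^\pi(S_i)=\EE{}{G_i \mid S_i}$ into each summand and use linearity of expectation together with the tower property to collapse the nested conditioning, writing $\EE{}{v^\pi(S_i)\mid S_t=s}=\EE{}{G_i\mid S_t=s}$; this reduces the left-hand side to $\EE{}{\sum_{i=0}^{t}(\lambda\gamma)^{t-i}G_i \mid S_t=s}$. The step that needs genuine care here is this collapse of the conditioning, since $G_i$ mixes rewards earned both before and after time $t$; I would justify it by appealing to the Markov structure relating $S_i$ and $S_t$.

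Next I would split each forward return about the index $t$ using $G_i=\sum_{j=i}^{t-1}\gamma^{j-i}R_j+\gamma^{t-i}G_t$ for $i\le t$, so that each summand decomposes into a ``past'' block of rewards earned strictly before $t$ and a discounted copy of the future return $G_t$. For the future block, $\sum_{i=0}^{t}(\lambda\gamma)^{t-i}\gamma^{t-i}G_t=G_t\sum_{m=0}^{t}(\gamma^2\lambda)^m$, a finite geometric series equal to $G_t\,\tfrac{1-(\gamma^2\lambda)^{t+1}}{1-\gamma^2\lambda}$. For the past block I would exchange the order of the double summation to group by reward index $j$, giving $\sum_{j=0}^{t-1}R_j\sum_{i=0}^{j}(\lambda\gamma)^{t-i}\gamma^{j-i}$; the inner sum is again geometric in $\gamma^2\lambda$ and evaluates to $(\lambda\gamma)^{t-j}\,\tfrac{1-(\gamma^2\lambda)^{j+1}}{1-\gamma^2\lambda}$, whose leading factor $(\lambda\gamma)^{t-j}$ is exactly the weight defining $\RG_t$.

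Finally, I would factor out $\tfrac{1}{1-\gamma^2\lambda}$ and separate each geometric sum into its ``$1$'' part and its ``$-(\gamma^2\lambda)^{\bullet+1}$'' truncation part. The ``$1$'' parts assemble precisely into $\EE{}{G_t+\RG_t\mid S_t=s}$. The main obstacle is showing that the two leftover truncation terms---one coming from the future block (proportional to $(\gamma^2\lambda)^{t+1}G_t$) and one coming from the past block (proportional to $\sum_{j<t}(\gamma^2\lambda)^{j+1}R_j$)---recombine into a single clean term. The key observation that makes this work is the identity $G_0=\sum_{j=0}^{t-1}\gamma^{j}R_j+\gamma^{t}G_t$: after pulling the common factor $\gamma(\lambda\gamma)^{t+1}$ out of both leftover pieces they merge into exactly $-\gamma(\lambda\gamma)^{t+1}\EE{}{G_0\mid S_t=s}$, the correction term in the statement. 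Collecting the three pieces over the common denominator $1-\gamma^2\lambda$ then yields the claimed identity.
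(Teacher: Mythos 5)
Your proposal is correct and takes essentially the same route as the paper's proof in Appendix~\ref{apx:lemma1_proof}: replace $v^\pi(S_i)$ by $G_i$ under the conditioning on $S_t=s$, regroup the rewards so each $R_j$ carries a geometric series in $\gamma^2\lambda$, and merge the two truncation remainders into $-\gamma(\lambda\gamma)^{t+1}G_0$ via the identity $G_0=\sum_{j=0}^{t-1}\gamma^j R_j+\gamma^t G_t$ (the paper's ``$t$-step return without bootstrapping''). Your finite split $G_i=\sum_{j=i}^{t-1}\gamma^{j-i}R_j+\gamma^{t-i}G_t$ is merely a tidier bookkeeping of the paper's reward-by-reward regrouping, and you make the same implicit collapse $\EE{}{v^\pi(S_i)\mid S_t=s}=\EE{}{G_i\mid S_t=s}$ that the paper also performs without comment.
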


\begin{proof}
    Appendix \ref{apx:lemma1_proof}. 
\end{proof}
 
In the equation above, the first two terms are the future and past discounted returns, as defined earlier. 
The third term, $G_0$, represents the return of the entire trajectory conditioned on the agent visiting state $s$ at time $t$; it decays by a factor proportional to $\lambda\gamma$ as the episode progresses.

% The discounted sum of rewards in the future $G_t$ is simply the definition of a value function at state $s$.
Similar to how $v^\pi$ is defined as the expectation of $G_t$, we can define another type of value function (akin to $v^\pi$) representing the expected return observed by the agent up to the current time:
\begin{align}\label{eq:backward_value_function}
    \RV^\pi (s) \coloneqq \,\EE{\pi}{\sum_{i=1}^{t} (\lambda \gamma)^i R_{t-i} | S_t = s}.
\end{align}
We call this value function, $\RV$, the \textit{backward value function},   
% \footnote{to keep notation less cluttered we will be dropping $\pi$ from the value functions, we will always use $v_\theta$ whenever we talk about approximations of value functions.} 
with the arrow being used to indicate the temporal direction of the prediction.\footnote{For brevity, we often drop $\pi$ from value functions, using $v_\theta$ to represent the  corresponding approximations of these functions.} This value represents the discounted sum of rewards the agent has received until now, where rewards earlier in the trajectory are more heavily discounted. The discount factor for this value function is $\lambda \gamma$, as opposed to the standard discount function, $\gamma$, used in the forward value function, $\FV$. Henceforth, we use $\FV$, $v^\pi$, and $v$ interchangeably.

Let us now define another value function: the sum of the backward and forward value functions. This function combines the expected return to go and the return to get to a state $s$. Simply put, it is the summation of $\RV$ and $\FV$ at a given state:
\begin{align}\label{eq:bidirectional_value_function}
    \BV (s_t) \coloneqq & \RV(s_t) + \FV(s_t) \\
    =&  \EE{}{(\sum_{i=1}^{t} (\gamma \lambda)^i R_{t-i} + \sum_{i=0}^{\infty} \gamma ^i R_{t+i}) | S_t = s_t}.
\end{align}
We refer to this value function as the \textit{bi-directional value function} and denote it as $\BV$ to indicate the directions in which it predicts returns. Notice that we dropped the $\EE{}{\gamma (\lambda\gamma)^{t+1}G_0 | S_t = s}$ term when defining this value function because  
%In Section \ref{sec:theory} we discuss ergodicity, and in
in the limit of $t\rightarrow \infty$, the influence of the starting state decreases, since $\lim_{t\rightarrow\infty}(\lambda\gamma)^{t+1} = 0$.

% \dhawal{From here on we need to justify learning this value function. }
%
%Considering the update provided in \eqref{eq:trace_update_functional} is composed of a discounted average of the value of states, $\BV$ also represents some form of a discounted value of a trajectory passing through a specific state (scaled by a constant factor).
%
Notice that $\BV$ represents (up to a constant scaling factor) the total discounted return received by the agent \textit{throughout an entire trajectory} and passing through a specific state.

In this work, we wish to further investigate the properties of the value functions $\BV$ and $\RV$, and whether learning $\BV$ and $\RV$ can facilitate the identification 
 of the forward value function, $v^\pi$. More precisely, in the next sections we: 
\begin{itemize}
    \item Investigate formal properties of these value functions in terms of learnability and convergence;
    \item Design principled and practical learning rules based on online stochastic sampling;
    % \item Experimental evaluation on how learning  $\BV^\pi$ can potentially lead us to more efficient ways to perform policy evaluation, i.e., 
    % % lead  leads to a more stable policy evaluation procedure---learning it allows agents to effectively and efficiently
    % infer $v^\pi$.
    % in challenging settings, wherein standard methods might falter.
    \item Evaluate whether learning $\BV^\pi$ may result in a more efficient policy evaluation process, particularly in settings where standard methods may struggle.
    % in which TD($\lambda$), when tasked with learning $v^\pi$ \textit{directly}, may fail catastrophically
\end{itemize}
% \BV$, in next sections, we formally investigate properties of these value functions . We int

\section{Bellman Equations and Theory} \label{sec:theory}
In this section, we show that the two newly introduced value functions ($\BV$ and $\RV$) have corresponding variants of Bellman equations. Previous work \cite{zhang2020learninga} has shown that $\RV$ allows for a recursive form (i.e., a Bellman equation), but our work is the first to present a Bellman equation for $\BV$. We also prove that these Bellman equations, when used to define corresponding Bellman operators, are contraction mappings; thus, by applying such Bellman updates we are guaranteed to converge to the corresponding value functions in the tabular setting.

% \textcolor{red}{Phil: For tabular problems.} 
% Hence applying these Bellman updates can indeed conve and we also show that under these Bellman updates, they are contraction mapping, hence which guarantees convergence at-least in the tabular case. 

First, we present the standard Bellman equation for the forward value function, $\FV$:
\begin{align*}
    \FV^\pi (s_t) &= \rpi(s_t) + \gamma \sum_{s_{t+1}}\FPpi(s_{t+1} | s_t) \FV^\pi(s_{t+1}),
\end{align*}
wherein we define 
% \todo{add arrow and the transition unctino for the reward}
$\rpi(s) = \sum_{a \in \A }\pi(a | s) R(s,a)$ and  $\FPpi(s' | s) = \sum_{a\in \A}\pi(a|s)P(s' | s,a)$. 
This is the standard Bellman equation for the forward value function. Similarly, we can show that the Bellman equation for the \textit{backward} value function can be written as 
\begin{align*}
    \RV^\pi (s_t) &=  \lambda \gamma \Rrpi(s_t) + \lambda \gamma \sum_{s_{t-1}}\RPpi(s_{t-1} | s_t) \RV(s_{t-1}).\\
\end{align*}
% \todo{Prove this thing based on the definition of the value function. }
% \vspace{-.1cm}
% \vskip 10pt
The expression above can be proved by applying the recursive definition of $\RV$ on \eqref{eq:backward_value_function},  where $\RPpi(s_{t-1} | s_t)$ and $ \Rrpi(s_t)$ are the backward-looking transition and reward functions. 
Appendix~\ref{appendix:defn_backward_func} presents further details about these definitions and Appendix~\ref{appendix:bellman_reverse} shows the proof/complete derivation of the above Bellman equation.
%The appendix contains the details on the definitions 
% \todo{define the backward probability function in terms of probability definition.}
%(Appendix \ref{appendix:defn_backward_func}) and proof (Appendix \ref{appendix:bellman_reverse}) for how we can derive the above Bellman equation.

% We provide the definitions proof and definitions of the above functions in Appendix \ref{appendix:bellman_reverse}, \ref{appendix:defn_backward_func}. 

\begin{theorem}
\label{thm:bellman_bv} 
Given the Bellman equations for $\FV^\pi$ and $\RV^\pi$, the Bellman equation for $\BV^\pi$ is
\begin{align}
    \BV^\pi (s_t) =&  \tfrac{1}{1 + \gamma^2\lambda} \Big( \rpi(s_t)(1 - \gamma^2\lambda) + \\&\gamma \sum_{s_{t+1}}\FPpi(s_{t+1} | s_t) \BV^\pi(s_{t+1}) + \\& \lambda \gamma  \sum_{s_{t-1}}\RPpi(s_{t-1} | s_t) \BV^\pi(s_{t-1}) \Big).
\end{align}
% \begin{alignat}{2}
%     &\BV^\pi (s_t) =&&  \tfrac{1}{1 + \gamma^2\lambda} \Big( \rpi(s_t)(1 - \gamma^2\lambda) + \\& &&\gamma \sum_{s_{t+1}}\FPpi(s_{t+1} | s_t) \BV^\pi(s_{t+1}) + \\ & && \lambda \gamma  \sum_{s_{t-1}}\RPpi(s_{t-1} | s_t) \BV^\pi(s_{t-1}) \Big)
% \end{alignat}
\end{theorem}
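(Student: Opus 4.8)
The plan is to derive the $\BV^\pi$ equation directly from the two Bellman equations already established, rather than from the return-based definition. Starting from the defining relation $\BV^\pi(s_t) = \FV^\pi(s_t) + \RV^\pi(s_t)$ and substituting the forward and backward Bellman equations gives
\begin{align*}
\BV^\pi(s_t) = \rpi(s_t) + \lambda\gamma\Rrpi(s_t) + \gamma\sum_{s_{t+1}}\FPpi(s_{t+1}\mid s_t)\FV^\pi(s_{t+1}) + \lambda\gamma\sum_{s_{t-1}}\RPpi(s_{t-1}\mid s_t)\RV^\pi(s_{t-1}).
\end{align*}
The target, however, is phrased entirely in terms of $\BV^\pi$ at the neighbouring states, so the next step is to replace $\FV^\pi$ and $\RV^\pi$ in these two sums using $\FV^\pi = \BV^\pi - \RV^\pi$ and $\RV^\pi = \BV^\pi - \FV^\pi$. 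This produces exactly the desired $\gamma\sum \FPpi \BV^\pi(s_{t+1})$ and $\lambda\gamma\sum \RPpi \BV^\pi(s_{t-1})$ terms, at the cost of two cross terms, $\gamma\sum_{s_{t+1}}\FPpi(s_{t+1}\mid s_t)\RV^\pi(s_{t+1})$ and $\lambda\gamma\sum_{s_{t-1}}\RPpi(s_{t-1}\mid s_t)\FV^\pi(s_{t-1})$.

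The heart of the proof is showing that these two cross terms collapse back onto local quantities at $s_t$. For the first, I would expand $\RV^\pi(s_{t+1})$ one step using the backward Bellman equation and then invoke the consistency relations between the forward and reverse dynamics from Appendix~\ref{appendix:defn_backward_func}---namely the compatibility of $\FPpi$ with $\RPpi$ and of $\rpi$ with $\Rrpi$ under the on-policy visitation distribution. The same manoeuvre, with the roles of forward and backward reversed, handles the second cross term. The outcome I expect is that each cross term reduces to a multiple of $\gamma^2\lambda$ times a local value plus a reward contribution, and, crucially, that the $\Rrpi(s_t)$ terms coming from the base expansion and from the cross terms cancel, leaving only $\rpi(s_t)$. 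This cancellation is what makes the final statement contain no backward reward, and it is the step I expect to be the main obstacle, since it hinges on the precise reverse-dynamics definitions rather than on generic algebra.

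With the cross terms resolved, the remaining work is bookkeeping: collecting the two $\gamma^2\lambda\,\BV^\pi(s_t)$ contributions onto the left-hand side yields the coefficient $1+\gamma^2\lambda$ multiplying $\BV^\pi(s_t)$, while the surviving reward terms combine into $(1-\gamma^2\lambda)\rpi(s_t)$; dividing through by $1+\gamma^2\lambda$ gives the claimed equation. As an independent check, I would verify the same identity against the return-based characterisation $\BV^\pi(s) = \EE{}{G_t + \RG_t \mid S_t = s}$ together with Lemma~\ref{lem:valuefunc}, using the one-step recursions $G_t = R_t + \gamma G_{t+1}$ and $\RG_t = \gamma\lambda R_{t-1} + \gamma\lambda \RG_{t-1}$; matching the coefficients obtained this way confirms the $1\pm\gamma^2\lambda$ factors and provides confidence that the reverse-dynamics step was carried out correctly.
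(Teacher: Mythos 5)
Your reduction is correct up to the appearance of the cross terms, but at that point nothing has been proven yet: given the forward and backward Bellman equations, the theorem is \emph{equivalent} to the cross-term identity
\begin{align}
\gamma\sum_{s'}\FPpi(s'\mid s_t)\RV^\pi(s') + \lambda\gamma\sum_{s'}\RPpi(s'\mid s_t)\FV^\pi(s') \;=\; \gamma^2\lambda\,\BV^\pi(s_t) + \gamma^2\lambda\,\rpi(s_t) + \lambda\gamma\,\Rrpi(s_t),
\end{align}
so the entire content of the proof is the ``collapse,'' and the mechanism you propose for it does not go through. Expanding $\RV^\pi(s')$ one step backward inside $\sum_{s'}\FPpi(s'\mid s_t)(\cdot)$ produces the composed kernel $\sum_{s'}\FPpi(s'\mid s_t)\RPpi(u\mid s')$ and the term $\sum_{s'}\FPpi(s'\mid s_t)\Rrpi(s')$; your cancellation needs the former to be a point mass at $u=s_t$ and the latter to equal $\rpi(s_t)$. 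Neither holds: $\RPpi(\cdot\mid s')$ averages over \emph{all} predecessors of $s'$ under the stationary flow $\dpi$, not the one actually just left, so stepping forward and then backward does not return to $s_t$ almost surely. Concretely, in the two-state chain with $P(A{\to}A)=P(A{\to}B)=\tfrac12$, $P(B{\to}A)=1$, state rewards $r(A)=1$, $r(B)=0$, one gets $\sum_{s'}\FPpi(s'\mid B)\Rrpi(s')=\Rrpi(A)=\tfrac12\neq 0=\rpi(B)$, and with $\gamma=\tfrac12$, $\lambda=1$ the two sides of the displayed identity at $s_t=B$ evaluate to $\tfrac{11}{10}$ and $\tfrac{9}{10}$. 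The probabilistic root cause is that $\RV^\pi(S_{t+1})$ conditions the past on $S_{t+1}$ alone, so $\EE{}{\RV^\pi(S_{t+1})\mid S_t=s_t}\neq\lambda\gamma\,\EE{}{R_t+\RG_t\mid S_t=s_t}$: conditioning on where you land is not the same as conditioning on where you started. The same exchange of conditioning events invalidates your second cross term and also your fallback ``independent check'' via the one-step recursions of $G_t$ and $\RG_t$.

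For comparison, the paper's proof never routes through the one-directional Bellman equations. It goes back to the definition $\BV^\pi(s)=\EE{}{\sum_{i\geq 1}(\lambda\gamma)^i R_{t-i}+\sum_{i\geq 0}\gamma^i R_{t+i}\mid S_t=s}$, multiplies by $\tfrac{1+\gamma^2\lambda}{1+\gamma^2\lambda}$, distributes the factor $(1+\gamma^2\lambda)$ over the per-reward coefficients, and regroups them into $\lambda\gamma$ times the bidirectional reward sum centred at $t-1$, plus $\gamma$ times the one centred at $t+1$, plus a residual $(1-\lambda\gamma^2)R_t$---a coefficient-bookkeeping argument inside a single conditional expectation given $S_t=s$. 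That said, even this route must, at its last step, identify the conditional expectations of the neighbouring bidirectional returns with $\sum_{s'}\RPpi(s'\mid s)\BV^\pi(s')$ and $\sum_{s'}\FPpi(s'\mid s)\BV^\pi(s')$, which is precisely the same conditioning exchange you need (the example above shows it is not innocuous: there, $\EE{}{\RG_{t+1}+G_{t+1}\mid S_t=B}=2$ while $\sum_{s'}\FPpi(s'\mid B)\BV^\pi(s')=\tfrac{11}{5}$). So the step you correctly flagged as ``the main obstacle'' is the genuine crux; your proposal asserts it rather than proves it, and as stated the asserted collapse is false for general ergodic chains rather than merely hard.
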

\begin{proof}
We provide a proof sketch here. The complete proof is in Appendix \ref{appendix:bidirectional_proof}. To prove this result, we first recall that, by definition, $\BV$ is the sum of $\RV$ and $\FV$. We the expand these terms into their corresponding Bellman equations. Further simplification of terms leads to the above Bellman equation.
\end{proof}
An important point to note in the above equation is that the value of $\BV(s_t)$ bootstraps from the value of the previous state ($\BV(s_{t-1})$), as well as the value of the next state ($\BV(s_{t+1})$), which makes sense since this value function does look in the past as well as future. Another observation regarding this equation is the division by a factor of $\frac{1}{1 + \lambda \gamma^2 }$, which can be viewed as a way to normalize the effect of summing overlapping returns from two bootstrapped state values.

Using the Bellman equations for $\RV_i$ and $\BV_i$, we now define their corresponding Bellman operators, $\RT$ and $\BT$, as:
\begin{align}\label{eq:bellman_operator_reverse}
    \RT(\RV_i(s)) \coloneqq   \lambda \gamma \Rrpi(s) + \lambda \gamma \sum_{s'}\RPpi(s' | s) \RV_i(s'), 
\end{align}
and %similarly 
% \todo{$s'$ looks like the next state in this case, change the notation to make it obvious that it is the previous state. }
\begin{align}\label{eq:bellman_operator_bidirectional}
    \BT(\BV_i (s')) \coloneqq&  \tfrac{1}{1 + \gamma^2\lambda} ( \rpi(s')(1 - \gamma^2\lambda) + \\& \gamma \sum_{s''}\FPpi(s'' | s') \BV_i(s'') + \\&\lambda \gamma  \sum_{s}\RPpi(s | s') \BV_i(s) ).
\end{align}

% \textcolor{red}{I think it's important to list the assumptions required by these theorems - they are not always true.}
\begin{assumption}
\label{assmp:ergodic}
The Markov chain induced by $\pi$ is ergodic. 
\end{assumption}

\begin{theorem}
\label{thm:contraction_rv}
Under Assumption \ref{assmp:ergodic}, the limit $\lim_{t \rightarrow \infty} \EE{}{\RG_t | S_t = s}$ exists and the operator defined in \eqref{eq:bellman_operator_reverse} is a contraction mapping; and hence, repeatedly applying it leads to convergence to $\RV^\pi$.%  on repeated application.
% \todo{say under the max norm}
\end{theorem}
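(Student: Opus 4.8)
The plan is to establish the result in three stages: existence of the limit that defines $\RV^\pi$, the contraction property of $\RT$, and a Banach fixed-point argument linking the two. Throughout I assume the usual conditions $\gamma \in [0,1)$ and $\lambda \in [0,1]$ so that $\lambda\gamma < 1$, and I use that the reward function is bounded, say $|R| \le R_{\max}$.

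First, to show that $\lim_{t\to\infty}\EE{}{\RG_t \mid S_t = s}$ exists, I would expand $\EE{}{\RG_t \mid S_t = s} = \sum_{i=1}^{t}(\lambda\gamma)^i\, \EE{}{R_{t-i}\mid S_t=s}$ and control it term by term. Each term equals $\sum_{s'} P(S_{t-i}=s'\mid S_t=s)\,\rpi(s')$ and is thus governed by the $i$-step backward marginal. Ergodicity (Assumption \ref{assmp:ergodic}) guarantees a unique stationary distribution $\dpi$ with full support, so the forward marginals converge to $\dpi$ regardless of $d_0$; consequently, for each fixed $i$, the backward marginal $P(S_{t-i}=s'\mid S_t=s)=\dpi(s')\,(\FPpi)^i(s\mid s')/\dpi(s)$ in the limit, i.e. the $i$-step iterate of the stationary backward kernel $\RPpi(s'\mid s)=\dpi(s')\FPpi(s\mid s')/\dpi(s)$. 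Since each term is bounded in magnitude by $(\lambda\gamma)^i R_{\max}$, the series is dominated by a convergent geometric series, so uniform convergence lets me exchange limit and summation. The overall limit then exists and is a finite, bounded function, which I take as $\RV^\pi$.

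Second, for the contraction I would work in the Banach space of bounded functions on $\S$ with the sup-norm $\|\cdot\|_\infty$. For any two bounded candidates $\RV_1,\RV_2$,
\begin{align*}
\big|\RT(\RV_1)(s) - \RT(\RV_2)(s)\big| = \lambda\gamma\,\Big|\sum_{s'}\RPpi(s'\mid s)\big(\RV_1(s')-\RV_2(s')\big)\Big| \le \lambda\gamma\sum_{s'}\RPpi(s'\mid s)\,\|\RV_1-\RV_2\|_\infty .
\end{align*}
The crucial step is that $\RPpi(\cdot\mid s)$ is a genuine probability distribution: $\sum_{s'}\RPpi(s'\mid s)=\tfrac{1}{\dpi(s)}\sum_{s'}\dpi(s')\FPpi(s\mid s')=1$ by stationarity of $\dpi$. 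Hence $\|\RT(\RV_1)-\RT(\RV_2)\|_\infty \le \lambda\gamma\,\|\RV_1-\RV_2\|_\infty$ with modulus $\lambda\gamma<1$, so $\RT$ is a contraction.

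Finally, by the Banach fixed-point theorem $\RT$ has a unique fixed point, and the iterates $\RV_{i+1}=\RT(\RV_i)$ converge to it from any initialization. It remains to identify the fixed point: substituting the $\RV^\pi$ obtained in the first stage into $\RT$ and unrolling one step recovers exactly the backward Bellman equation derived earlier in this section, so $\RV^\pi$ is a fixed point and, by uniqueness, the limit of the iteration. The hard part will be the first stage: making the $t\to\infty$ limit well-defined requires ergodicity both to force convergence of the backward marginals independently of $d_0$ and to guarantee $\dpi(s)>0$, so that $\RPpi$ is well-defined and stochastic — the very fact that also drives the contraction estimate. The contraction bound itself is routine once the stochasticity of $\RPpi$ is established.
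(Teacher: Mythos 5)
Your proof is correct, and its core---the sup-norm estimate $\| \RT \RV_1 - \RT \RV_2 \|_\infty \le \lambda\gamma \| \RV_1 - \RV_2 \|_\infty$, driven by the row-stochasticity of the backward kernel $\RPpi$---is exactly the argument in the paper's appendix, so the contraction stage matches the paper step for step. Where you genuinely diverge is in the other two stages, and in both cases your route is more self-contained. For the existence of $\lim_{t \to \infty} \EE{}{\RG_t \mid S_t = s}$, the paper simply defers to Theorem~1 of \citet{zhang2020learninga}, whereas you prove it directly: termwise convergence of the backward marginals $P(S_{t-i} = s' \mid S_t = s)$ under ergodicity, combined with the geometric domination $|(\lambda\gamma)^i \, \EE{}{R_{t-i} \mid S_t = s}| \le (\lambda\gamma)^i R_{\max}$ to justify exchanging the $t \to \infty$ limit with the sum (a Moore--Osgood/dominated-convergence step); this buys a proof that does not lean on an external result. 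For the key fact $\sum_{s'} \RPpi(s' \mid s) = 1$, the paper argues informally from reachability and must patch a corner case at episode starts by adjoining a dummy initial state, while you obtain it in one line from stationarity of $\dpi$ (together with $\dpi(s) > 0$ from ergodicity, which also makes $\RPpi$ well defined)---cleaner, and it makes explicit that the single hypothesis of ergodicity powers both the existence and the contraction stages. You also spell out the final identification, via the Banach fixed-point theorem and the backward Bellman equation, that the unique fixed point is $\RV^\pi$; the paper leaves this implicit in its closing sentence.
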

\begin{proof}
The proof is provided in the Appendix \ref{appendix:contraction_reverse}.
\end{proof}

\begin{theorem}
\label{thm:bellman_bv}
Under Assumption \ref{assmp:ergodic}, $\lim_{t \rightarrow \infty} \EE{}{\RG_t + G_t | S_t = s}$ exists and the operator defined in \eqref{eq:bellman_operator_bidirectional} is a contraction mapping; and hence, repeatedly applying it leads to convergence to $\BV^\pi$.
\end{theorem}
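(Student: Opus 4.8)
The plan is to establish two things: first, that the limit $\lim_{t \to \infty} \EE{}{\RG_t + G_t \mid S_t = s}$ exists, and second, that the operator $\BT$ defined in \eqref{eq:bellman_operator_bidirectional} is a contraction under a suitable norm, which then yields convergence to $\BV^\pi$ by the Banach fixed-point theorem. For the existence of the limit, I would reuse the machinery already established in Theorem \ref{thm:contraction_rv}: the forward return $\EE{}{G_t \mid S_t = s}$ is the standard $\FV^\pi(s)$, which is finite under $\gamma < 1$ and the boundedness of $R$, while the backward piece $\lim_{t\to\infty}\EE{}{\RG_t \mid S_t = s}$ was already shown to exist (and equal $\RV^\pi(s)$) in Theorem \ref{thm:contraction_rv}. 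Since $\BV^\pi(s) = \RV^\pi(s) + \FV^\pi(s)$ by definition \eqref{eq:bidirectional_value_function}, the limit of the sum exists as the sum of the two limits, and it coincides with $\BV^\pi$. This reduces the existence half to linearity plus two facts already in hand.

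For the contraction claim, I would take two value estimates $\BV_1, \BV_2$ and bound $\|\BT \BV_1 - \BT \BV_2\|$. Applying $\BT$ from \eqref{eq:bellman_operator_bidirectional}, the reward term $\rpi(s')(1-\gamma^2\lambda)$ cancels, leaving
\begin{align*}
\BT\BV_1(s') - \BT\BV_2(s') = \tfrac{1}{1+\gamma^2\lambda}\Big(&\gamma \sum_{s''}\FPpi(s''\mid s')\,(\BV_1 - \BV_2)(s'') \\
&+ \lambda\gamma \sum_{s}\RPpi(s\mid s')\,(\BV_1-\BV_2)(s)\Big).
\end{align*}
The key structural fact is that both $\FPpi(\cdot \mid s')$ and $\RPpi(\cdot \mid s')$ are (row-)stochastic: the forward transition kernel is stochastic by construction, and the backward kernel $\RPpi$, defined via Bayes' rule against the stationary distribution, is also stochastic (this is where Assumption \ref{assmp:ergodic} enters, guaranteeing a well-defined stationary distribution so that $\RPpi$ exists and sums to one). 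Taking the supremum norm and using that a convex combination against a stochastic kernel cannot increase the sup-norm, each inner sum is bounded by $\|\BV_1 - \BV_2\|_\infty$, giving the contraction factor
\begin{align*}
\frac{\gamma + \lambda\gamma}{1 + \gamma^2\lambda} = \frac{\gamma(1+\lambda)}{1+\gamma^2\lambda}.
\end{align*}

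The main obstacle is verifying that this factor is strictly less than $1$. This is not automatic: one must show $\gamma(1+\lambda) < 1 + \gamma^2\lambda$, i.e., $\gamma + \gamma\lambda - \gamma^2\lambda - 1 < 0$, which factors as $-(1-\gamma)(1 - \gamma\lambda) < 0$. Since $\gamma \in [0,1)$ and $\lambda \in [0,1]$, both $(1-\gamma) > 0$ and $(1-\gamma\lambda) > 0$, so the product is strictly positive and the inequality holds. Hence the contraction factor is strictly below $1$, and $\BT$ is a contraction in $\|\cdot\|_\infty$. I would be careful to confirm that the $\sup$-norm is indeed contracted (rather than needing a weighted norm tied to the stationary distribution, as sometimes arises for backward operators); if the stochasticity of $\RPpi$ holds exactly, the plain sup-norm suffices, but if normalization issues make $\RPpi$ only sub-stochastic or require the stationary weighting, I would instead carry out the contraction in the stationary-distribution-weighted norm, which is the step most likely to require the careful use of Assumption \ref{assmp:ergodic}. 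Given the contraction, the Banach fixed-point theorem guarantees a unique fixed point, and since $\BV^\pi$ satisfies the Bellman equation of Theorem \ref{thm:bellman_bv} it is that fixed point, so repeated application of $\BT$ converges to $\BV^\pi$.
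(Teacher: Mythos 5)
Your proposal matches the paper's own argument essentially step for step: existence of the limit is obtained by adding the forward value $\EE{}{G_t \mid S_t = s} = \FV^\pi(s)$ to the backward limit already established in Theorem~\ref{thm:contraction_rv}, and the contraction is shown in the sup-norm by cancelling the reward term, exploiting that $\FPpi$ and $\RPpi$ are stochastic kernels (with ergodicity ensuring $\RPpi$ is well defined and normalized, which the paper handles via a dummy start state), arriving at the same factor $\gamma(1+\lambda)/(1+\gamma^2\lambda)$ and verifying it is strictly below one --- your factorization $-(1-\gamma)(1-\gamma\lambda)<0$ is just a cleaner rendering of the paper's rearrangement to $\lambda \leq 1/\gamma$. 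The proof is correct and requires no changes.
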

\begin{proof}
The proof is provided in the Appendix \ref{appendix:contraction_bidirectional}.
\end{proof}

\subsection{Online and Incremental Update Equations}
% \todo{A stochastic sample of the v bidreictional s is this, and where the sample is drawn from this, and similar to the standard semi-gradient methods, its built by using bootstrapping. (stochastic sample/ bootstrapping/semi gradient)} 
In the previous section we introduced the Bellman operators for two value functions and proved that their corresponding operators are contractions.
%of their corresponding operator. 
% , when applied iteratively does lead it to converge to a fixed point \todo{which is the backward vpi under tabular representation}, 
We would now like to derive an update equation allowing agents to learn such value functions from stochastic samples. Similar to how the TD(0) 
% \todo{the corresponding version of boostrapped update to update the value function} 
update rule is motivated by its corresponding Bellman operator, we can define similar update rules for $\BV$ and $\RV$. 
% iven we had derived the bellman equation and operator for our value functions in the previous section we can define update equations motivated from the TD(0) update to learn these value functions. 
Let us parameterize our value functions as follows: $\FV_\theta, \RV_\phi, \BV_\psi$. The TD(0) equivalent of the update equations for the parameters of $\RV_\phi$ and $\BV_\psi$ value functions are presented below.
\begin{align}
    \shortintertext{\textbf{Update for $\psi$ and $\phi$ }:}
    \triangle \psi_t =& \alpha \Bdelta_t \frac{\partial \BV_\psi (S_t)}{\partial \psi} \Bigr|_{\psi = \psi_t},\,\,\, \triangle \phi_t = \alpha \Rdelta_t \frac{\partial \RV_\phi(S_t)}{\partial \phi} \Bigr|_{\phi = \phi_t}.  \label{eq:td_bv_rv} 
    % \shortintertext{\textbf{Update for $\phi$}}
    % \triangle \phi_t =& \alpha ( {\color{black} \lambda \gamma R_{t-1} + \lambda \gamma \RV_{\phi_t} (s_{t-1} )} - \RV_{\phi_t}(s_t) ) \frac{\partial \RV_\phi(s_t)}{\partial \phi} \Bigr|_{\phi = \phi_t} = \alpha \Rdelta_t \frac{\partial \RV_\phi(s_t)}{\partial \phi} \Bigr|_{\phi = \phi_t}. \label{eq:td_rv}\\
\end{align}
\noindent where the corresponding TD error are defined as  $\Bdelta_t \coloneqq \tfrac{1}{1 + \gamma^2\lambda} (R_t (1 - \gamma^2 \lambda) + \gamma \BV_{\psi_t}(S_{t+1}) + \gamma \lambda \BV_{\psi_t}(S_{t-1}))- \BV_{\psi_t} (S_t)) $ and $\Rdelta_t \coloneqq  \lambda \gamma R_{t-1} + \lambda \gamma \RV_{\phi_t} (S_{t-1} ) - \RV_{\phi_t}(S_t)  $.

The above update equations (as is the case with standard TD methods) can be computed online and incrementally; i.e., they incur a fixed computation cost per step and thus allow us to distribute compute evenly throughout an agent's trajectory.

Note that when implementing such updates, we can use a scalar value to store the backward return and obtain an online version of the Monte Carlo update for $\RV$, i.e., $ \RG_t = \lambda \gamma \RG_{t-1} + \lambda \gamma R_{t-1}$,
% \begin{align}
%     \RG_t = \lambda \gamma \RG_{t-1} + \lambda \gamma R_{t-1},
% \end{align}
    leading to the following online update:
    % which can be performed online, 
\begin{align}
    \triangle \phi_t =& \alpha ( { \color{black} \RG_t } - \RV_{\phi_t}(S_t) ) \frac{\partial \RV_\phi(S_t)}{\partial \phi} \Bigr|_{\phi = \phi_t} \label{eq:mc_rv}.
\end{align}
% \todo{Wrong equation is for backward-return calculating --check once. }
\noindent where we define $\RG_0 = 0$ and start updating  $\RV$ at time step $t=1$. 
% This gives a Monte Carlo version of the $\triangle \phi_t$ as :
% \begin{align}\label{eq:mc_rv}
%     \triangle \phi_t =& \alpha ( {\color{black} \RG_t } - \RV_{\phi_t}(s_t) ) \frac{\partial \RV_\phi(s_t)}{\partial \phi} \Bigr|_{\phi = \phi_t}.
% \end{align}
In Appendix \ref{appendix:update_equations} we present several update equation variants that rely on other types of value functions.

\section{Experiments}\label{sec:experiments}
We investigate three questions: \textbf{RQ1}: Can we jointly parameterize all three value functions such that learning each of them individually helps to learn the other two? 
%\BCS{del this $\rightarrow$ wherein values of each of the value functions are interlocked with each other}. 
\textbf{RQ2}: Can learning such value functions facilitate/accelerate the process of evaluating forward value functions compared to standard techniques like TD($\lambda$)? \textbf{RQ3}: What is the influence of $\lambda$ on the method's performance? 
% Is there any influence that difference values of $\lambda$ might have in terms of performance?

\subsection{Parameterization (RQ1)}\label{subsec:parameterization}
% In this section, we will  discuss 
% address the second issue which was pointed regarding 
% how to parameterize our value functions so that learning one can help us learn the other value functions. 
We would like to identify a parameterization for our value functions such that training $\BV$ helps learn $\FV$, i.e., the value function we ultimately care about. We can leverage the mathematical property that $\BV = \RV + \FV$ to learn $\FV$ such that these two functions are interdependent and allow for the other to be inferred.

Figure \ref{fig:parameterization} shows a possible way to parameterize the value functions. In this case, we parameterize $\BV$ as the sum of the other two heads of a single-layer neural network. In particular, $\BV= \RV + v$, and so $\theta = \{w^1, w^2\}$, $\phi = \{w^1, w^3\}$, and $\psi=\{w^1, w^2, w^3\} $. We refer to this parameterization as \textbf{BiTD-FR}.  We can similarly fully parameterize the forward value function with all the weights as shown in Appendix \ref{appendix:parameterization} (Figure \ref{fig:parameterization_b}), wherein $v = \BV - \RV$. In this case, the parameterization becomes $\theta = \{w^1, w^2, w^3\}$, $\phi = \{w^1, w^3\}$, and $\psi=\{w^1, w^2\}$;  i.e., we make use of all the weights to parameterize $v$. We refer to this variant as \textbf{BiTD-BiR}. Similarly, for completeness, we define a third parameterization as $\RV = \BV - v$, wherein $\theta = \{w^1, w^2\}$, $\phi = \{ w^1, w^2, w^3\}$, and $\psi = \{w^1, w^3\}$. We call this parameterization \textbf{BiTD-FBi}. 
% Figure \ref{fig:parameterization} shows two possible ways to parameterize the value function. In Figure \ref{fig:parameterization_a}, we parameterize $\BV$ as the summation of the other two heads. In this case, $\theta = \{w^1, w^2\}, \phi = \{w^1, w^3\}, \psi=\{w^1, w^2, w^3\} $. We can also fully parameterize the forward value function with all the weights as shown in Figure \ref{fig:parameterization_b}, wherein $v = \BV - \RV$, and for this case the parameterization becomes $\theta = \{w^1, w^2, w^3\}, \phi = \{w^1, w^3\}, \psi=\{w^1, w^2\} $, i.e. we make use of all the weights to parameterize $v$. 
Note that we have only shown a 1-layer neural network, but  $w^1$ can be replaced with any arbitrarily deep neural network and the parameterization would still remain valid. Our formulation simply imposes more structure in our value function representation. 

Training this network is also straightforward since with a single forward pass we can estimate all three value functions. We can also compute the losses of all three heads with their respective TD/MC updates as shown in \eqref{eq:td_v}, \eqref{eq:td_bv_rv}
% \eqref{eq:td_bv}, \eqref{eq:td_rv} 
and \eqref{eq:mc_rv}. 

\begin{figure}
  \centering
  \begin{minipage}{.5\linewidth}
    \centering
    \includegraphics[width=\linewidth]{./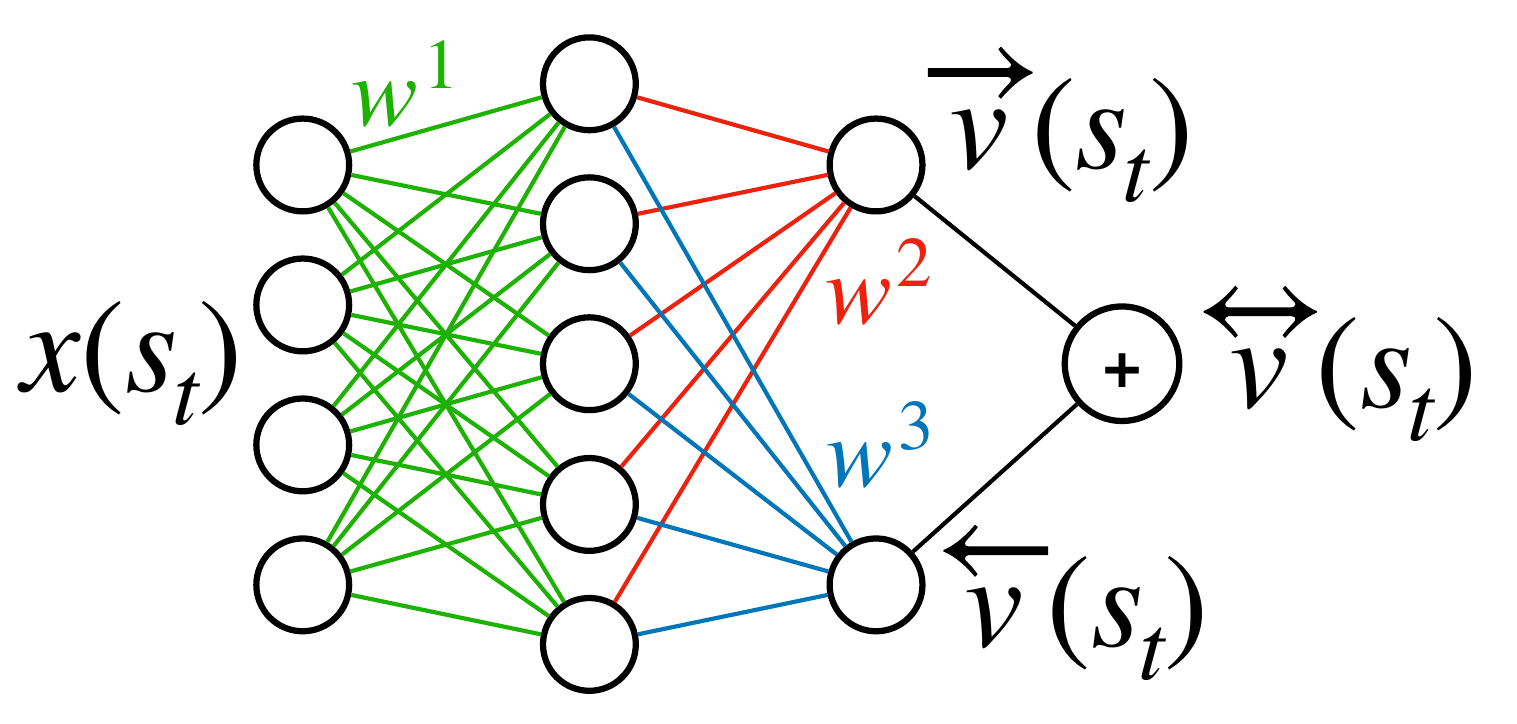}
  \end{minipage}%
  \begin{minipage}{.5\linewidth}
    \caption{Parameterizing the three value functions: We parameterize $\protect\BV$ to be by summation of the other two value functions.}
    \label{fig:parameterization}
  \end{minipage}
\end{figure}

% \begin{figure}
% \centering
% % \begin{subfigure}{.5\textwidth}
%   \centering
%   \includegraphics[width=0.5\linewidth]{./figures/BiTD.pdf}
%  % \caption{}
%   % \caption{The intended update to the value of past states}
%   % \label{fig:parameterization_a}
% % \end{subfigure}%
% % \begin{subfigure}{.5\textwidth}
% %   \centering
% %   \includegraphics[width=0.90\linewidth]{./figures/fv.pdf}
% %   \caption{}
% %   % \caption{The possible update which can occur for past states because of non linear FA. }
% %   \label{fig:parameterization_b}
% % \end{subfigure}
% \caption{Parameterizing the three value functions: We parameterize $\protect\BV$ to be by summation of the other two value functions. } 

% % \todo{move one of th figures to appendix and tell aboutb}}
% \label{fig:parameterization}
% \end{figure}

\subsection{Policy Evaluation (RQ2 \& RQ3)}
% We will start by looking at a simple property of these values functions that might be useful, consider the problem of policy evaluation, for a Markov reward process on a simple chain where 

We study the utility of using these value functions in a standard prediction task. One issue that might arise in value function approximation occurs when trying to approximate non-smooth value functions---i.e., value functions that might change abruptly w.r.t. to the input feature.
% that standard and as well as eligibility traces often suffer is when the value function tends to be non-smooth \cite{[cite]}, 
% \todo{need to justify this or find a citation for this }, 
 In RL, this implies that the value of spatially similar states may differ vastly. 
% \lb{also, if you want a continuous-state domain with discontinuous value function, mcar is a good domain. It has a large jump of value function at the rim of the cliff (the success borderline.)}

\begin{figure}
  \centering
  \includegraphics[width=1.0\linewidth]{./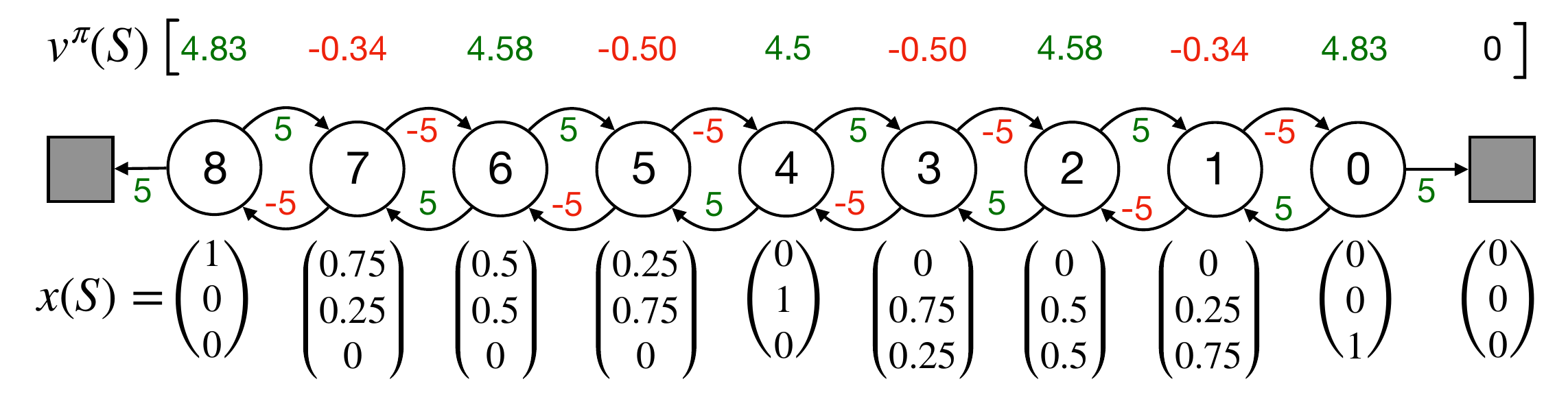}
  \caption{Chain Domain}
  \label{fig:chain}
  \end{figure}

For our prediction problem, we consider a chain domain with 9 states~\cite{sutton2018introduction} as depicted in Figure \ref{fig:chain}. The initial state is drawn from a uniform distribution over the state space. The agent can only take two actions (go left and go right) and the ending states of the chain are terminal. We use a feature representation (motivated by Boyan's chain \cite{boyan2002technical}) such that the values of nearby states are forced to generalize over multiple features---a property commonly observed in continuous-state RL problems. To simulate a highly irregular value function, we define a reward function that fluctuates between -5 and +5 between consecutive states.  

We evaluate each TD learning algorithm (along with the Monte Carlo variant for learning $\RV$) in terms of their ability to approximate the value function of a uniform random policy, $\pi(\texttt{left} | \cdot) = \pi(\texttt{right} | \cdot) =0.5$, under a discount factor of $\gamma = 0.99$. The analytically derived value function is shown in Figure~\ref{fig:chain} alongside the feature representation used for each state. In our experiments, we sweep over multiple values of learning rate ($\alpha$) and $\lambda$. We use the learning rate for all value function heads. Each run corresponds to 50K training/environment steps, and we average the loss function over $100$ seeds. We used a single-layer neural network with 9 units in the hidden layer and ReLU as the non-linearity.

\begin{figure}[!h]
 \centering
    \includegraphics[width=0.70\linewidth]{./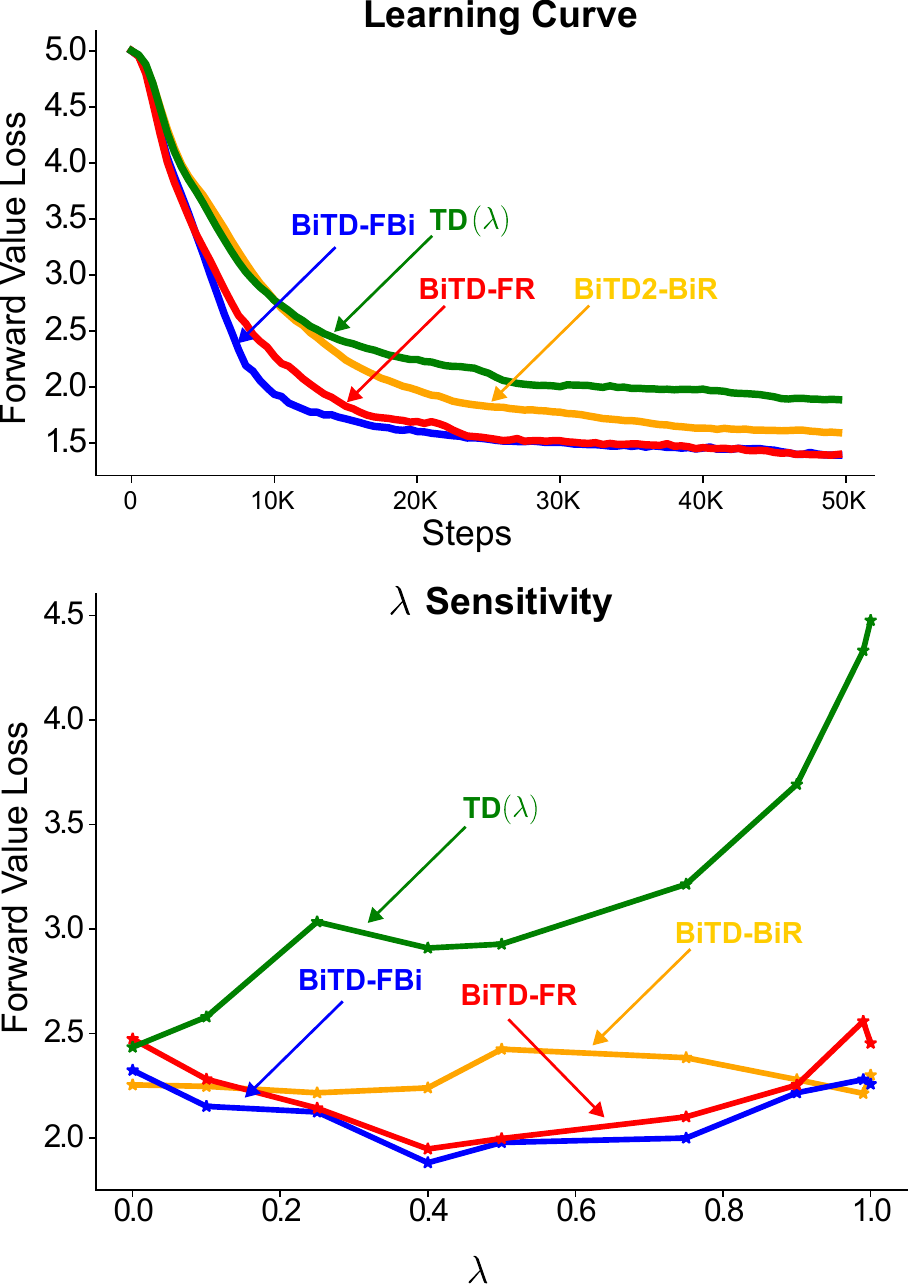}
    \caption{Experimental results for prediction in random chain domain. The $y$ axis shows the MSTDE error of the forward value function. \textbf{(top)} Best performing parameter setting for \textbf{BiTD-FR, BiTD-BiR, BiTD-FBi} and standard \textbf{TD($\lambda$)}. 
    \textbf{(bottom)} We compare all \textbf{BiTD} variants and TD($\lambda$) for different values of $\lambda$; notice that any values $\lambda > 0$ are detrimental to TD($\lambda$)'s performance, but can aid in performing policy evaluation using the proposed framework.}
    \label{fig:experiment}
\end{figure}
Figure \ref{fig:experiment} depicts the results of this experiment with hyperparameters optimized for Area Under Curve (AUC). \textbf{(RQ2)} From Fig.~\ref{fig:experiment}\textbf{(top)} we can see how all variants of \textbf{BiTD} achieve a lower MSTDE loss  than TD($\lambda$) for a given number of samples. \textbf{(RQ3)} To investigate the role of $\lambda$ in the efficiency of policy evaluation, we analyze Fig.~\ref{fig:experiment}\textbf{(bottom)}. From this figure, we can see that TD($\lambda$)'s performance deteriorates strictly as the value of $\lambda$ increases, and that it performs best for $\lambda = 0$. We also notice that \textbf{BiTD-FR} performs similarly to TD for $\lambda = 0$, but its performance is better for intermediate values of $\lambda$ (with the best performance being for $\lambda = 0.4$). Furthermore, notice that among different BiTD methods, the ones that directly approximate $v$ (BiTD-FR and BiTD-FBi) seem to perform better than the ones that indirectly approximate it, like BiTD-BiR. Appendix \ref{appendix:parameterization} provides detailed plots with standard error bars (as well as the sensitivity of different methods w.r.t $\alpha$ and $\lambda$) to better understand how $\alpha$ may affect different methods.

% \section{Related Works and Literature}

\section{Literature Review}

% \subsection{TD($\lambda$) for Deep RL}

The successful application of eligibility traces has been historically associated with linear function approximation \citep{sutton2018introduction}. The update rules for eligibility traces, explicitly designed for linear value functions, encounter ambiguities when extended to nonlinear settings. The fact that the RL community started to use non-linear function approximations more often (due to the rise of deep RL) led to the wider use of experience replay, making eligibility traces hard to properly deploy. 
% e interpretations of these rules for such approximators have become multi-faceted and no single interpretation emerged as dominant. 
Nonetheless, several works have tried to adapt traces for deep RL \cite{tesaurio1992practical,elfwing2018sigmoid}. 
% faced challenges, largely owing to the pervasive use of experience replay in modern deep RL \cite{mnih2015human}. 
Traces have found some utility in methods such as advantage estimation \cite{schulman2017proximal}. One interesting interpretation, proposed by \citet{hasselt2020expected}, applies expected traces to the penultimate layer in neural nets while maintaining the running trace for the remaining network.  Traces have also been modified to be combined with experience replay \citep{daley2018efficient}.
% maintaining a linear view for the final layer.

% % \subsection{Backward TD Models}

Backward TD learning offers a new perspective to RL by integrating ``hindsight'' into the credit assignment process. Unlike traditional forward-view TD learning, which defines updates based on expected future rewards from present decisions, backward TD works retrospectively, estimating present values from future outcomes. This shift to a ``backward view'' has spurred significant advancements.  \citet{chelu2020forethought} underscored the pivotal roles of both foresight and hindsight in RL, illustrating their combined efficacy in algorithmic enhancement.
\citet{wang2021offline} leveraged backward TD for offline RL, demonstrating its potential in settings where data collection proves challenging. Further, the efficacy of backward TD learning methods in imitation learning tasks was highlighted by \citet{park2022robust}. \citet{zhang2020learninga} reinforced the need for retrospective knowledge in RL, underscoring the significance of the backward TD methods.

% \subsection{Auxiliary Tasks for Value Learning}
% One may consider learning $\BV, \RV$ as auxiliary tasks, although these tasks align with the main task of learning the forward value function. 
One may consider learning $\BV$ and $\RV$ as auxiliary tasks. Unlike standard learning scenarios where auxiliary tasks may not directly align with the value function prediction objective, in our case, learning $\BV$ and $\RV$ results in auxiliary tasks that \textit{directly} complement and align with the primary goal of learning the forward value function.
Auxiliary tasks, when incorporated into RL, often act as catalysts, refining the primary task's learning dynamics. Such tasks span various functionalities---next-state prediction, reward forecasting, representation learning, and policy refinement, to name a few \cite{lin2019adaptive,rafiee2022makes}. The ``Unsupervised Auxiliary Tasks'' framework, introduced by \citet{jaderberg2016reinforcement} demonstrates how auxiliary tasks can enhance feature representations, benefiting primary and auxiliary tasks.

\section{Conclusion and Future Work}
In this work, we unveiled an inconsistency resulting from the combination of eligibility traces and non-linear value function approximators. Through a deeper investigation, we derived a new type of value function---a \emph{bidirectional value function}---and showed principled update rules and convergence guarantees. We also introduced online update equations for stochastic sample-based learning methods. Empirical results suggest that this new value function might surpass traditional eligibility traces in specific settings, for various values of $\lambda$, offering a novel perspective to policy evaluation.

Future directions include, e.g., extending our on-policy algorithms to off-policy prediction. Another promising direction relates to exploring analogous functions but in the context of control rather than prediction. We would also like to investigate the hypothesis that bidirectional value functions (akin to average reward policy evaluation methods, which model complete trajectories) may be a first step towards unifying discounted and average-reward RL settings.

% remove this in the final submission
\newpage

% \bibliography{aaai23}
\bibliography{reference}

\appendix
\onecolumn
\section*{From Past to Future: Rethinking  Eligibility Traces \\(Supplementary Material)}
% \section{Appendix}

\section{Experiment with Stale Gradients}\label{appendix:stale_gradients}
% \subsection{Definitions for $\RPpi, \Rrpi$}

\begin{figure}[ht]
  \vskip 0.2in
  \begin{center}
  \centerline{\includegraphics[width=\columnwidth]{./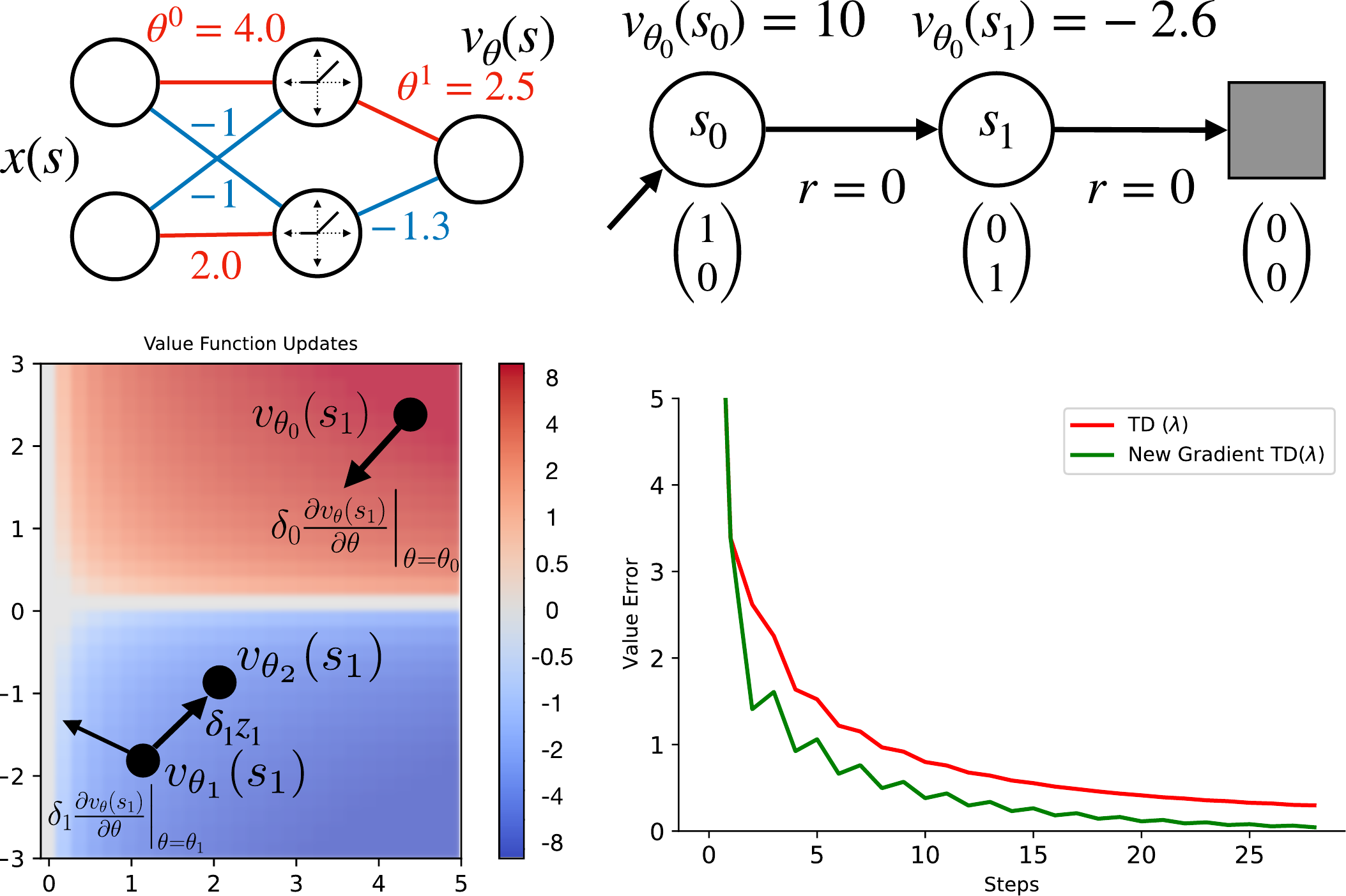}}
  \caption{Example }
  \label{fig:nonlinear_trace}
  \end{center}
  \vskip -0.2in
\end{figure}

Consider a simple example above. We have a 2-state MDP, wherein the agent always starts in state $s_0$ and deterministically transitions to $s_1$ and from there to the terminal state making one episode in its trajectory. At every step the agent receives a reward of 0, hence the true value functions for this MDP is $v(s_0) = v(s_1) = 0$. We use a $\gamma = 0.99, \lambda = 0.95$, and a one-hot feature encoding for these respective states. To simulate the issues with non-linear function approximations, we choose a one-layer neural network as shown in the figure with ReLU activations, wherein we initialize the weights as shown in the figure. We take a close look at the learning schedule of the value of $s_0$, as we see in our value surface plot that, the first update reduces the value of the state (as intended because of the negative $\delta$). But because of the high $\delta$, the value corrects to be negative (as $v_0(s_1)$ is negative as well. Hence, at this point, we are maintaining a trace to update our value functions, and it contains the gradient of the value function wrt $\theta_0$. Now when we move to the update at time $t=1$, we see a positive $\delta$, and hence should rightly update the value of previous states in the positive direction ( At this point both values of states are negative). But because we are using an old trace that still points in the direction to increase the value for $v(s_0)$ based on old weights (and since then some weights have shifted to a point where the gradient points in the opposite direction), this actually leads to a further decrease in the value function. We can see in the graph the ideal direction of the update and the actual direction of the update are at an obtuse angle. 

This also affects the speed of learning as we can see in the learning curve, Note that this method will also converge in the limit, but can slow down the learning at a small scale.  
\section{Definitions for Reverse Transition and Reward Functions}\label{appendix:defn_backward_func}

In this section, we will define the different transition and reward functions that were introduced in Section \ref{sec:theory}. 
% Lets first define a bunch of functions to do the analysis in the past or the opposite direction of our markov chain. 
In standard literature, we have $\Pr(S_t | S_{t-1}, A_{t-1})$ and the reward function (both looking forward in time). We state the following again for ease of reading, $\FPpi(s'|s) = \sum_{a\in\A} \pi(a|s) P(s'|s,a)$ and $\rpi(s) = \sum_{a\in\A}\pi(a|s)R(s,a)$.

Let us start by defining $\RPpi(s'| s)$ as the probability of being in state $s'$ at time $t-1$, given the agent is in state $s$ at time $t$, i.e.,
\begin{align*}
    \RPpi(s' | s ) &\coloneqq \frac{\Pr(S_{t-1} = s' , S_t = s)}{\Pr(S_t = s)}\\
    & = \dpi(s)^{-1}\dpi(s')  \FPpi( s | s')
\end{align*}

Now lets us define $\Rrpi(s)$, as the expected reward agent received for entering state $s$ at time $t$. Before deriving the expression for this, we define another probability, i.e., 
\begin{align*}
    \Pr(S_{t-1} = s_{t-1}, A_{t-1} = a_{t-1} | S_t = s_t) &= \frac{ \Pr(S_{t-1} = s_{t-1}, A_{t-1} = a_{t-1},S_t =  s_t)}{\Pr(S_t = s_t)}\\
    =& P (s_t | s_{t-1}, a_{t-1}) \dpi (s_{t-1}) \pi(a_{t-1} | s_{t-1})\dpi(s_t)^{-1} \\
    \shortintertext{Hence, let's define a new transition function, which looks backward, i.e.,}\\
    \RPpi(s_{t-1}, a_{t-1} | s_t) \coloneqq & \Pr(S_{t-1} = s_{t-1}, A_{t-1} = a_{t-1} | S_t = s_t) \\
     =& P (s_t | s_{t-1}, a_{t-1}) \dpi (s_{t-1}) \pi(a_{t-1} | s_{t-1})\dpi(s_t)^{-1}. \\
\end{align*}
Hence for $\Rrpi(s_t)$, we get, 
\begin{align*}
    \Rrpi(s) &\coloneqq \sum_{a',s'} \Pr(S_{t-1} = s', A_{t-1} = a' | S_t= s)R(s',a') \\
    &= \sum_{a',s'} \RPpi(s_{t-1}, a_{t-1} | s_t) R(s,a)
\end{align*}

\section{Proof of Lemma \ref{lem:valuefunc}}\label{apx:lemma1_proof}
% \lb{Let's go over this together this week. I still don't understand at which step $v^\pi$ shifts to $v_\theta$. As Phil said, the shift must be very careful. My hunch is that we may be inspired by the way Rich proved the policy gradient theorem --- finding what comes together with $v^\pi$, such as $Fv^\piG$ always come together, then we assume the condition $Fv^\piG= Fv^\theta G$.}
\begin{align}
    &\EE{}{\sum_{i=0}^t (\lambda\gamma)^{t-i} v^\pi(S_i) | S_t = s} \\
    &= \EE{}{v^\pi(S_t) | S_t = s} + \gamma \lambda \EE{}{v^\pi(S_{t-1})|S_t = s}  + (\gamma \lambda)^2 \EE{}{ v^\pi(S_{t-2}) | S_t = s}  + \ldots  \\ 
    =& \EE{}{G_t | S_t = s} + \gamma \lambda \EE{}{G_{t-1} | S_t = s} + (\gamma \lambda )^2 \EE{}{G_{t-2}|S_t = s}  \ldots  \\
    &= \left( 
        \begin{array}{l}
          (\gamma\lambda)^0\EE{}{({\color{black}R_t} + \gamma {\color{black}R_{t+1}} + \gamma^2 R_{t+2} + \gamma^3 R_{t+3} + \ldots) \Bigr| S_t = s} +  \\ [0.5em]
          (\gamma\lambda)^1\EE{}{({\color{black}R_{t-1}} + \gamma {\color{black}R_t} + \gamma^2 {\color{black}R_{t+1}} + \gamma^3 R_{t+2} + \ldots)\Bigr| S_t = s} + \\ [0.5em]
          (\gamma\lambda)^2\EE{}{(R_{t-2} + \gamma {\color{black}R_{t-1}} + \gamma^2 {\color{black}R_t} + \gamma^3 {\color{black}R_{t+1}} + \ldots)\Bigr| S_t = s} + \ldots \\ [0.5em]
          (\gamma\lambda)^t\EE{}{ (R_{0} + \gamma R_{1} + \gamma^2 R_2 + \gamma^3 R_{4} + \ldots) \Bigr| S_t = s}  \\ [0.5em]
        \end{array} \right)\\
        % &= \Big(  &&\EE{}{(R_t + \gamma R_{t+1} + \gamma^2 R_{t+2} + \ldots)} +  \\
        % &  \gamma \lambda  &&\EE{}{ (R_{t-1} + \gamma R_t + \gamma^2 R_{t+1} + \gamma^3 R_{t+2} + \ldots)} + \\
        % &  (\gamma \lambda)^2 &&\EE{}{(R_{t-2} + \gamma R_{t-1} + \gamma^2 R_{t} + \gamma^3 R_{t+1} + \ldots)} + \ldots \Big)\\
       &= \EE{} {
       \left.
       \begin{array}{l} 
           \;\;\;\;\;\;\;\;\;(\gamma\lambda)^t R_0 ( 1 ) + \\ [0.5em]
           \;\;\;\;\;\;\;\;\;(\gamma\lambda)^{t-1} R_1 ( 1 + \lambda\gamma^2 ) +\\ [0.5em]
           \;\;\;\;\;\;\;\;\; \vdots \\[0.5em]
            \;\;\;\;\;\;\;\;\;(\gamma\lambda)^3 R_{t-3} ( 1 + \lambda \gamma^2 + \lambda^2 \gamma^4 + \ldots + (\lambda\gamma^2)^{t-3}) + \\ [0.5em]
            \;\;\;\;\;\;\;\;\;(\gamma\lambda)^2 R_{t-2} ( 1 + \lambda \gamma^2 + \lambda^2 \gamma^4 + \ldots + (\lambda\gamma^2)^{t-2}) + \\ [0.5em]
            \;\;\;\;\;\;\;\;\;(\gamma\lambda)^1 {\color{black}R_{t-1}}( 1 + \lambda \gamma^2 + \lambda^2 \gamma^4 + \ldots + (\lambda\gamma^2)^{t-1}) + \\ [0.5em]
            \;\;\;\;\;\;\;\;\;(\gamma\lambda)^0 {\color{black}R_t}\quad \left ( 1 + \lambda \gamma^2 + \lambda^2 \gamma^4 + \ldots  + (\lambda \gamma^2)^t \right ) + \\ [0.5em]
            \;\;\;\;\;\;\;\;\;(\gamma)^1 \;\;{\color{black}R_{t+1}} ( 1 + \lambda \gamma^2 + \lambda^2 \gamma^4 + \ldots + (\lambda \gamma^2)^t) + \\ [0.5em]
            \;\;\;\;\;\;\;\;\;(\gamma)^2 \;\;R_{t+2} ( 1 + \lambda \gamma^2 + \lambda^2 \gamma^4 + \ldots + (\lambda \gamma^2)^t) + \ldots \\ [0.5em] 
       \end{array}
       \right\vert S_t = s
       }\\
       &=\left( \sum_{i=0}^{t}(\gamma^2\lambda)^i\right)\EE{}{   {\color{black}R_t} + \gamma  {\color{black}R_{t+1}} + \gamma^2 R_{t+2} \ldots \Bigr| S_t = s } + \\
       & \;\;\;\;\;\;\; \EE{}{ \left.
           \begin{array}{l} 
           (\lambda\gamma)^t (\frac{ 1 - (\lambda\gamma^2)^{t+1-t}}{ 1 - \lambda \gamma^2 }) R_{0} +  \ldots \\ [0.5em]
            (\lambda \gamma )^3 (\frac{1 - (\lambda\gamma^2)^{t-2}}{1 - \lambda \gamma^2})R_{t-3} +  \\ [0.5em]
            (\lambda \gamma )^2 (\frac{1 - (\lambda\gamma^2)^{t-1}}{1 - \lambda \gamma^2})R_{t-2}  + \\ [0.5em]
            (\lambda \gamma )^1 (\frac{1 - (\lambda\gamma^2)^t}{1 - \lambda \gamma^2}){\color{black}R_{t-1}}   \\ [0.5em]
           \end{array}
           \right\vert S_t = s
       }\\
         &=\left(\frac{1 - (\gamma^2\lambda)^{t+1}}{1 - \gamma^2\lambda}\right)\EE{}{   {\color{black}R_t} + \gamma  {\color{black}R_{t+1}} + \gamma^2 R_{t+2} \ldots \Bigr| S_t = s  } + \\
       & \;\;\;\;\;\;\; \EE{}{ \left.
           \begin{array}{l} 
           (\lambda\gamma)^t (\frac{ 1 - (\lambda\gamma^2)^{t+1-t}}{ 1 - \lambda \gamma^2 }) R_{0} +  \ldots \\ [0.5em]
            (\lambda \gamma )^3 (\frac{1 - (\lambda\gamma^2)^{t-2}}{1 - \lambda \gamma^2})R_{t-3} +  \\ [0.5em]
            (\lambda \gamma )^2 (\frac{1 - (\lambda\gamma^2)^{t-1}}{1 - \lambda \gamma^2})R_{t-2}  + \\ [0.5em]
            (\lambda \gamma )^1 (\frac{1 - (\lambda\gamma^2)^t}{1 - \lambda \gamma^2}){\color{black}R_{t-1}}   \\ [0.5em]
           \end{array}
           \right\vert S_t = s
       }\\
       &= \frac{1 - (\gamma^2\lambda)^{t+1}}{1 - \gamma^2\lambda}\EE{}{   \sum_{i=0}^{\infty}\gamma^i R_{t+i} \Bigr| S_t = s} + \EE{}{\sum_{i=1}^{t} (\lambda\gamma)^i \frac{1 - (\lambda\gamma^2)^{t+1-i}}{1-\lambda \gamma^2} R_{t-i} \Bigr| S_t = s } \\
       &= \frac{1}{1 - \gamma^2\lambda} \Big( (1 - (\gamma^2\lambda)^{t+1})\EE{}{   \sum_{i=0}^{\infty}\gamma^i R_{t+i} \Bigr| S_t = s} +\\& \EE{}{\sum_{i=1}^{t} (\lambda\gamma)^i (1 - (\lambda\gamma^2)^{t+1-i}) R_{t-i} \Bigr| S_t = s } \Big)\\
       &=  \frac{1}{1 - \gamma^2\lambda} \Big( (1 - (\gamma^2\lambda)^{t+1})\EE{}{   \sum_{i=0}^{\infty}\gamma^i R_{t+i} \Bigr| S_t = s } +\\& \EE{}{\sum_{i=1}^{t} ((\lambda\gamma)^i  - (\lambda\gamma^2)^{t+1-i} (\lambda\gamma)^i )R_{t-i} \Bigr| S_t = s} \Big)\\
       &=  \frac{1}{1 - \gamma^2\lambda} \Big( (1 - (\gamma^2\lambda)^{t+1})\EE{}{   \sum_{i=0}^{\infty}\gamma^i R_{t+i} \Bigr| S_t = s } + \\& \EE{}{\sum_{i=1}^{t}(\lambda\gamma)^iR_{t-i} \Bigr| S_t = s}   - \EE{}{ \sum_{i=1}^{t}(\lambda\gamma^2)^{t+1-i} (\lambda\gamma)^i )R_{t-i} \Bigr| S_t = s } \Big)\\
       &=  \frac{1}{1 - \gamma^2\lambda} \Big( (1 - (\gamma^2\lambda)^{t+1})\EE{}{   \sum_{i=0}^{\infty}\gamma^i R_{t+i}  \Bigr| S_t = s} + \EE{}{\sum_{i=1}^{t}(\lambda\gamma)^iR_{t-i} \Bigr| S_t = s }   \\&- \EE{}{ \sum_{i=1}^{t}(\lambda^{t+1 - \cancel{i} + \cancel{i} } \gamma^{2t + 2 - \cancel{2i} + \cancel{i}}) R_{t-i}  \Bigr| S_t = s} \Big)\\
     % \shortintertext{Checkpoint}
       &=  \frac{1}{1 - \gamma^2\lambda} \Big( (1 - (\gamma^2\lambda)^{t+1})\EE{}{   \sum_{i=0}^{\infty}\gamma^i R_{t+i}  \Bigr| S_t = s}  + \EE{}{\sum_{i=1}^{t}(\lambda\gamma)^iR_{t-i} \Bigr| S_t = s}   \\&- \EE{}{ \sum_{i=1}^{t}(\lambda^{t+1} \gamma^{2t + 2 - i}) R_{t-i}  \Bigr| S_t = s } \Big)\\
       &=  \frac{1}{1 - \gamma^2\lambda} \Big( (1 - (\gamma^2\lambda)^{t+1})\EE{}{   \sum_{i=0}^{\infty}\gamma^i R_{t+i} \Bigr| S_t = s} + \EE{}{\sum_{i=1}^{t}(\lambda\gamma)^iR_{t-i} \Bigr| S_t = s}   \\&- \EE{}{ (\lambda \gamma)^{t+1}\sum_{i=1}^{t}( \gamma^{t + 1 - i}) R_{t-i} \Bigr| S_t = s } \Big)\\
       &=  \frac{1}{1 - \gamma^2\lambda} \Big( (1 - (\gamma^2\lambda)^{t+1})\EE{}{   \sum_{i=0}^{\infty}\gamma^i R_{t+i} \Bigr| S_t = s } + \EE{}{\sum_{i=1}^{t}(\lambda\gamma)^iR_{t-i} \Bigr| S_t = s}   \\&- \EE{}{ (\lambda \gamma)^{t+1} \gamma \sum_{i=1}^{t}( \gamma^{t - i}) R_{t-i}  \Bigr| S_t = s} \Big)\\
    % \end{align*}
    % \begin{align*}
       &=  \frac{1}{1 - \gamma^2\lambda} \Big( (1 - (\gamma^2\lambda)^{t+1}) \EE{}{   \sum_{i=0}^{\infty}\gamma^i R_{t+i} \Bigr| S_t = s } + \EE{}{\sum_{i=1}^{t}(\lambda\gamma)^iR_{t-i} \Bigr| S_t = s}   - \\& \EE{}{ (\lambda \gamma)^{t+1} \gamma \sum_{i=0}^{t-1}( \gamma^{i}) R_{i} \Bigr| S_t = s } \Big)\\
       &=  \frac{1}{1 - \gamma^2\lambda} \Big( (1 - (\gamma^2\lambda)^{t+1})\EE{}{   \sum_{i=0}^{\infty}\gamma^i R_{t+i} \Bigr| S_t = s} + \EE{}{\sum_{i=1}^{t}(\lambda\gamma)^iR_{t-i} \Bigr| S_t = s}   - \\& (\lambda\gamma)^{t+1}\gamma \EE{}{ \sum_{i=0}^{t-1}( \gamma^{i}) R_{i} \Bigr| S_t = s } \Big)\\
       &=  \frac{1}{1 - \gamma^2\lambda} \Big((1 - (\gamma^2\lambda)^{t+1})\EE{}{   \sum_{i=0}^{\infty}\gamma^i R_{t+i} \Bigr| S_t = s} + \EE{}{\sum_{i=1}^{t}(\lambda\gamma)^iR_{t-i} \Bigr| S_t = s}  \\&  -  (\lambda\gamma)^{t+1}\gamma \Big( \EE{}{ \sum_{i=0}^{\infty}( \gamma^{i}) R_{i} \Bigr| S_t = s } - \EE{}{ \sum_{i=t}^{\infty}( \gamma^{i}) R_{i}  \Bigr| S_t = s } \Big) \Big)\\
        &=  \frac{1}{1 - \gamma^2\lambda} \Big( (1 - (\gamma^2\lambda)^{t+1})\EE{}{   \sum_{i=0}^{\infty}\gamma^i R_{t+i} \Bigr| S_t = s } + \EE{}{\sum_{i=1}^{t}(\lambda\gamma)^iR_{t-i} \Bigr| S_t = s }  \\& -  (\lambda\gamma)^{t+1}\gamma \Big( \EE{}{ \sum_{i=0}^{\infty}( \gamma^{i}) R_{i} \Bigr| S_t = s} - \gamma^t \EE{}{ \sum_{i=0}^{\infty}( \gamma^{i}) R_{t+i} \Bigr| S_t = s } \Big) \Big)\\
        &=  \frac{1}{1 - \gamma^2\lambda} \Big((1 - (\gamma^2\lambda)^{t+1}) \EE{}{   \sum_{i=0}^{\infty}\gamma^i R_{t+i} \Bigr| S_t = s } + \EE{}{\sum_{i=1}^{t}(\lambda\gamma)^iR_{t-i} \Bigr| S_t = s}   \\&-  (\lambda\gamma)^{t+1}\gamma \underbrace{\Big( \EE{}{  G_0 \Bigr| S_t = s } - \gamma^t \EE{}{ G_{t} \Bigr| S_t = s } \Big)}_{\text{t-step return without bootstrapping}} \Big)\\
        &=  \frac{1}{1 - \gamma^2\lambda} \Big( (1 - (\gamma^2\lambda)^{t+1} + (\gamma^2\lambda)^{t+1})\EE{}{ G_t \Bigr| S_t = s} + \EE{}{\RG_t\Bigr| S_t = s}   - \\& (\lambda\gamma)^{t+1}\gamma \EE{}{G_0\Bigr| S_t = s} \Big)\\
        &=  \frac{1}{1 - \gamma^2\lambda} \Big( \EE{}{ G_t \Bigr| S_t = s} + \EE{}{\RG_t\Bigr| S_t = s}   -  (\lambda\gamma)^{t+1}\gamma \EE{}{G_0\Bigr| S_t = s} \Big).\\
        % &=  \frac{1}{1 - \gamma^2\lambda} \Big(   (1 + (\lambda\gamma^2)^{t+1} \EE{}{G_ } + \EE{}{\sum_{i=1}^{t}(\lambda\gamma)^iR_{t-i}}   -  (\lambda\gamma)^{t+1}\gamma \Big( \EE{}{  G_0} - \gamma^t \EE{}{ G_{t} } \Big) \Big)\\
        % &=  \frac{1}{1 - \gamma^2 \lambda} \EE{}{(\sum_{i=1}^{t} (\gamma \lambda)^i R_{t-i} + \sum_{i=0}^{\infty} \gamma ^i R_{t+i})}\\
\end{align}

\section{Proof for $\protect\RV$ Bellman equation}\label{appendix:bellman_reverse}

Let's start with the definition of the backward value function
\begin{align*}
    \RV^\pi (s) =& \,\EE{\pi}{\sum_{i=1}^{t} (\lambda \gamma)^i R_{t-i} \Big| S_t = s} \\
    =& \, \EE{\pi}{\lambda\gamma R_{t-1} + \sum_{i=2}^{t} (\lambda \gamma)^i R_{t-i} \Big| S_t = s}\\
    =& \, \lambda \gamma \EE{\pi}{R_{t-1} \Big| S_t = s} + \lambda \gamma \EE{\pi}{\sum_{i=1}^{t-1} (\lambda \gamma)^i R_{t-1-i} \Big| S_t = s }\\
    =& \lambda \gamma \sum_{s_{t-1}, a_{t-1}} \Pr(S_{t-1} = s_{t-1},A_{t-1} =  a_{t-1} | S_t = s) r(s_{t-1}, a_{t-1}, s) + \\&
    \lambda \gamma \EE{\pi}{\RG_{t-1} \Big| S_t = s } \\ 
    =& \lambda \gamma \sum_{s_{t-1}, a_{t-1}} \RPpi(s_{t-1}, a_{t-1} |  s) r(s_{t-1}, a_{t-1}, s) + \\& \lambda \gamma \sum_{s_{t-1}} \Pr(S_{t-1} = s_{t-1} | S_t = s) \Pr(\RG_{t-1} | S_{t-1} = s_{t-1}) \RG_{t-1} \\
    =& \lambda \gamma \Rrpi(s) + \lambda \gamma \sum_{s_{t-1}} \RPpi(s_{t-1} | s_t) \EE{\pi}{ \sum_{i=1}^{t-1} (\lambda \gamma)^{i} R_{t-1-i} \Big| S_{t-1} = s_{t-1}}\\
    =& \lambda \gamma \Rrpi(s) + \lambda \gamma \sum_{s_{t-1}} \RPpi(s_{t-1} | s_t) \RV^\pi(s_{t-1})
\end{align*}

% \todo{The below parts are not relevant}

% Lets look at the recursive bellman equation for $\RV$. 
% \begin{align*}
%     \RV(s_t) &= \lambda \gamma \Rrpi (s_t) + \lambda \gamma \RPpi(s_{t-1} | s_t, a_{t-1}) \RV(s_{t-1})\\ 
%     &=  \lambda \gamma (\Rrpi(s_t) + \sum_{s_{t-1},  a_{t-1}}\RPpi(s_{t-1} | s_t, a_{t-1}) \RV(s_{t-1})) \\
%     &= \lambda \gamma (\dpi(s_t)^{-1}\sum_{s_{t-1}} \dpi(s_{t-1})  \rpi(s_{t-1}, s_t) +  \sum_{s_{t-1},  a_{t-1}} \frac{\dpi(s_{t-1}) \pi(a_{t-1} | s_{t-1}) \P(s_t | s_{t-1}, a_{t-1})} {\dpi(s_{t}) \Rpi(a_{t-1} | s_t)} \RV(s_{t-1})  )\\
%     &= \frac{\lambda \gamma}{\dpi(s_t)} (\sum_{s_{t-1}} \dpi(s_{t-1})( \rpi(s_{t-1}, s_t) + \sum_{a_{t-1}} \frac{ \pi(a_{t-1} | s_{t-1}) \P(s_t | s_{t-1}, a_{t-1})} {\Rpi(a_{t-1} | s_t)} \RV(s_{t-1})   )
% \end{align*}

\section{Proof Theorem 4.1} \label{appendix:bidirectional_proof}
% \begin{itemize}
%     \item $ v(s_t) = R_t + \gamma v(s_{t+1})$ 
%     \item $\RV(s_t) = \lambda \gamma R_{t-1} + \lambda \gamma \RV(s_{t-1})$
%     \item $\BV(s_t) =\frac{1}{(1 + \lambda \gamma^2)} (R_t(1 - \lambda \gamma^2 ) + \lambda \gamma \BV(s_{t-1}) + \gamma \BV (s_{t+1})) $
% \end{itemize}
% \subsubsection{Recursive form of $\BV$ } 

% Proof for Theorem 1. 
Considering the limiting case for $\lim t \rightarrow \infty$, we start with the definition of $\BV$ and expand further. 
\begin{align*}
    \BV(s) &= \EE{}{ \sum_{i=1}^{\infty} (\lambda \gamma )R_{t-i} + \sum_{i=0}^\infty R_{t+i} \Bigr| S_t = s}\\
    &= \frac{1 + \gamma^2 \lambda}{1 + \gamma^2 \lambda}\EE{}{ \sum_{i=1}^{\infty} (\lambda \gamma )R_{t-i} + \sum_{i=0}^\infty R_{t+i} \Bigr| S_t = s}\\
    &= \frac{1}{1 + \gamma^2 \lambda}\EE{}{ (1 + \gamma^2\lambda) \Big( \sum_{i=1}^{\infty} (\lambda \gamma )R_{t-i} + \sum_{i=0}^\infty R_{t+i} \Big) \Bigr| S_t = s}\\
       &=\frac{1}{1 + \gamma^2 \lambda} \EE{} {
       \left.
       \begin{array}{l} 
            \;\;\;\;\;\;\;\;\;\vdots \\[0.5em]
           \;\;\;\;\;\;\;\;\;(\gamma\lambda)^2 R_{t-2} (1 + \gamma^2 \lambda)+ \\ [0.5em]
           \;\;\;\;\;\;\;\;\;(\gamma\lambda) R_{t-1} (1 + \gamma^2 \lambda)+  \\ [0.5em]
           \;\;\;\;\;\;\;\;\;\gamma^0\;\; R_{t} (1 + \gamma^2 \lambda)+  \\ [0.5em]
           \;\;\;\;\;\;\;\;\;\gamma \;\;\; R_{t+1} (1 + \gamma^2 \lambda)+  \\ [0.5em]
           \;\;\;\;\;\;\;\;\;\gamma^2\;\;\; R_{t+2} (1 + \gamma^2 \lambda)+  \\ [0.5em]
           \;\;\;\;\;\;\;\;\;\vdots \\[0.5em]
       \end{array}
       \right\vert S_t = s
       }\\
       &=\frac{1}{1 + \gamma^2 \lambda} \EE{} {
       \left.
       \begin{array}{l} 
            \;\;\;\;\;\;\;\;\;\vdots \\[0.5em]
           \;\;\;\;\;\;\;\;\; R_{t-2} ((\gamma\lambda)^2 + \gamma^4 \lambda^3)+ \\ [0.5em]
           \;\;\;\;\;\;\;\;\;R_{t-1} (\gamma\lambda + \gamma^3 \lambda^2)+  \\ [0.5em]
           \;\;\;\;\;\;\;\;\;R_{t} (1 + \gamma^2 \lambda - \gamma^2 \lambda + \gamma^2 \lambda  )+  \\ [0.5em]
           \;\;\;\;\;\;\;\;\;R_{t+1} (\gamma + \gamma^3 \lambda)+  \\ [0.5em]
           \;\;\;\;\;\;\;\;\;R_{t+2} (\gamma^2 + \gamma^4 \lambda)+  \\ [0.5em]
           \;\;\;\;\;\;\;\;\;\vdots \\[0.5em]
       \end{array}
       \right\vert S_t = s
       }\\
       &=\frac{1}{1 + \gamma^2 \lambda}\Big( \EE{} {
       \left.
       \begin{array}{l} 
            \;\;\vdots \\[0.5em]
           \;\; R_{t-2} ((\gamma\lambda)^2 )+ \\ [0.5em]
           \;\;R_{t-1} (\gamma\lambda )+  \\ [0.5em]
           \;\;R_{t} ( \lambda\gamma^2   )+  \\ [0.5em]
           \;\;R_{t+1} (\gamma^3 \lambda )+  \\ [0.5em]
           \;\;R_{t+2} (\gamma^4 \lambda)+  \\ [0.5em]
           \;\;\vdots \\[0.5em]
       \end{array}
       \right\vert S_t = s
       } + 
       \EE{} {
       \left.
       \begin{array}{l} 
            \;\;\vdots \\[0.5em]
            \;\; R_{t-2} (\gamma^4 \lambda^3)+ \\ [0.5em]
            \;\;R_{t-1} ( \gamma^3 \lambda^2)+  \\ [0.5em]
            \;\;R_{t} ( \gamma^2 \lambda )+  \\ [0.5em]
            \;\;R_{t+1} (\gamma)+  \\ [0.5em]
            \;\;R_{t+2} ( \gamma^2)+  \\ [0.5em]
            \;\;\vdots \\[0.5em]
       \end{array}
       \right\vert S_t = s
       } + \EE{}{R_t(1 - \lambda\gamma^2 \Bigr| S_t =s )}\Big)
       \\
       =&\frac{1}{1 + \gamma^2 \lambda}\Big( \lambda\gamma \EE{} {
       \left.
       \begin{array}{l} 
            \;\;\vdots \\[0.5em]
           \;\; R_{t-2} ((\gamma\lambda) )+ \\ [0.5em]
           \;\;R_{t-1} +  \\ [0.5em]
           \;\;R_{t} ( \gamma^1   )+  \\ [0.5em]
           \;\;R_{t+1} (\gamma^2 )+  \\ [0.5em]
           \;\;R_{t+2} (\gamma^3 )+  \\ [0.5em]
           \;\;\vdots \\[0.5em]
       \end{array}
       \right\vert S_t = s
       } + 
       \gamma \EE{} {
       \left.
       \begin{array}{l} 
            \;\;\vdots \\[0.5em]
            \;\; R_{t-2} (\gamma^3 \lambda^3)+ \\ [0.5em]
            \;\;R_{t-1} ( \gamma^2 \lambda^2)+  \\ [0.5em]
            \;\;R_{t} ( \gamma \lambda )+  \\ [0.5em]
            \;\;R_{t+1} +  \\ [0.5em]
            \;\;R_{t+2} ( \gamma^1)+  \\ [0.5em]
            \;\;\vdots \\[0.5em]
       \end{array}
       \right\vert S_t = s
       } + \EE{}{R_t(1 - \lambda\gamma^2 \Bigr| S_t =s )}\Big)
       \\
      =&\frac{1}{1 + \gamma^2 \lambda}\Big( \lambda\gamma \EE{} {
       \left.
       \sum_{i=1}^{\infty} (\lambda\gamma)^iR_{t-1-i} + \sum_{i=0}^{\infty} \gamma^i R_{t-i+i}
       \right\vert S_t = s
       } + 
       \gamma \EE{} {
       \left.
       \sum_{i=i}^{\infty} (\lambda\gamma)^i R_{t+1-i} + \sum_{i=0}^{\infty} \gamma^i R_{t+1+i}
       \right\vert S_t = s
       } +\\ & \EE{}{R_t(1 - \lambda\gamma^2 \Bigr| S_t =s )}\Big)
       \\
       =& \frac{1}{1 + \gamma^2 \lambda}\Big(\lambda\gamma \EE{}{\BV(S_{t-1}) \Bigr| S_t = s} + \gamma \EE{}{\BV(S_{t+1}) \Bigr|S_t = s } + \rpi (s) (1 - \lambda\gamma^2)\Big)\\
       =& \frac{1}{1 + \gamma^2 \lambda}\Big(\lambda\gamma \sum_{s'} \RPpi(s'|s) \BV(s') + \gamma \sum_{s''} \FPpi(s''|s) \BV(s'') + \rpi(s) (1 - \lambda\gamma^2)\Big)
 \end{align*}
 Hence Proved.

\section{Proof Of Theorem 4.3, 4.4}
Proofs for the first statement for Theorem 4.3 follow from the proof of Theorem 1 in \citet{zhang2020learninga}, and following the existence of $\RV$, the summation of the two value functions should also exist. In the following parts, we prove that the operators $\RT, \BT$ are contraction mapping under the $\infty$-norm under the tabular representation, and hence converge to $\RV, \BV $ on repeated applications. 

% \dhawal{Cite zhang and tell that the first part of the proof can be borrowed from there. }

\subsubsection{Contraction Mapping for $\protect\RT$}\label{appendix:contraction_reverse}
Let value functions,  $\RV_1$ and $\RV_2$ be two value function estimates, and see how the $\RT$ operator behaves under the $\max$ norm. 
% the contraction mapping between these two

\begin{align*}
    || \RT \RV_1 - \RT \RV_2 ||_\infty =& \max_{s_t}  | \RT \RV_1(s_t) - \RT \RV_2(s_t) |\\
    =& \max_{s_t} | \cancel {\lambda \gamma \Rrpi(s_t)} + \lambda \gamma \sum_{s_{t-1}}\RPpi(s_{t-1} | s_t) \RV_1 (s_{t-1}) - ( \cancel{\lambda \gamma \Rrpi(s_t)} + \lambda \gamma \sum_{s_{t-1}}\RPpi(s_{t-1} | s_t) \RV_2(s_{t-1}))  | \\
    =& \max_{s_t} \lambda \gamma |   \sum_{s_{t-1}}\RPpi(s_{t-1} | s_t) \RV_1 (s_{t-1}) - \sum_{s_{t-1}}\RPpi(s_{t-1} | s_t) \RV_2(s_{t-1}) |\\
    =& \max_{s_t} \lambda \gamma |   \sum_{s_{t-1}} \RPpi(s_{t-1} | s_t) (\RV_1 (s_{t-1})  - \RV_2(s_{t-1})) |\\
    & \leq \max_{s_t} \lambda \gamma |   \sum_{s_{t-1}} \RPpi(s_{t-1} | s_t) \max_{s_{t-1}}|\RV_1 (s_{t-1})  - \RV_2(s_{t-1})| |\\
    & \leq \max_{s_t} \lambda \gamma |    \max_{s_{t-1}}|\RV_1 (s_{t-1})  - \RV_2(s_{t-1})|  \sum_{s_{t-1}} \RPpi(s_{t-1} | s_t) |\\
     & \leq \max_{s_t} \lambda \gamma |    \max_{s_{t-1}}|\RV_1 (s_{t-1})  - \RV_2(s_{t-1})|  \times 1 |\\
    & \leq \lambda \gamma || \RV_1 - \RV_2||_\infty 
\end{align*}
\textbf{Note}: In the above we can say that $\sum_{s_{t-1}} \RPpi(s_{t-1} | s_t) = 1$ due to ergodicity in the chain induced by $\pi$, every state is reachable from every other state.
% in most of the cases, as the current state $s_t$ has to be visited from one of the possible states in our state space. 
This won't be true only for the case wherein $s_t$ happens to be the starting state. We can take care of this corner case by modifying our MDP such that we have a dummy state from where we always start, and transition to our starting state based on $d_0$. Hence once we include this dummy state into our previous states we can say that $\sum_{s_{t-1}} \RPpi(s_{t-1} | s_t) = 1$. Also, we won't need to consider the dummy state as $s_t$ ever because we don't define a value function on this dummy state (or we can define it as 0). 

\subsubsection{Contraction Mapping for $\protect\BT$}\label{appendix:contraction_bidirectional}

Using similar technique we can start proving contraction mapping for $\BT$ for two value function estimates, i.e.,  $\BV_1$ and $\BV_2$ as follows : 
\begin{align*}
    || \BT \BV_1 - \BT \BV_2 ||_\infty =& \max_{s_t}  | \BT \BV_1(s_t) - \BT \BV_2(s_t) |\\
    =& \max_{s_t} \frac{1}{1 + \lambda \gamma^2}  | ( \gamma \sum_{s_{t+1}}\FPpi(s_{t+1} | s_t) \BV_1(s_{t+1}) + \lambda \gamma  \sum_{s_{t-1}}\RPpi(s_{t-1} | s_t) \BV_1(s_{t-1}) ) \\& - ( \gamma \sum_{s_{t+1}}\FPpi(s_{t+1} | s_t) \BV_2(s_{t+1}) + \lambda \gamma  \sum_{s_{t-1}}\RPpi(s_{t-1} | s_t) \BV_2(s_{t-1}) ) |\\
    =& \max_{s_t} \frac{\gamma}{1 + \lambda \gamma^2}  | (  \sum_{s_{t+1}}\FPpi(s_{t+1} | s_t) \BV_1(s_{t+1}) +  \lambda  \sum_{s_{t-1}}\RPpi(s_{t-1} | s_t) \BV_1(s_{t-1}) ) \\& - (\sum_{s_{t+1}}\FPpi(s_{t+1} | s_t) \BV_2(s_{t+1}) +  \lambda  \sum_{s_{t-1}}\RPpi(s_{t-1} | s_t) \BV_2(s_{t-1}) ) |\\
      =& \max_{s_t} \frac{\gamma}{1 + \lambda \gamma^2}  |  \sum_{s_{t+1}}\FPpi(s_{t+1} | s_t) (\BV_1(s_{t+1}) - \BV_2(s_{t+1})) + \\&   \lambda  \sum_{s_{t-1}}\RPpi(s_{t-1} | s_t) (\BV_1(s_{t-1}) - \BV_2(s_{t-1}) ) |\\
      &\leq \max_{s_t} \frac{\gamma}{1 + \lambda \gamma^2}  ( |  \sum_{s_{t+1}}\FPpi(s_{t+1} | s_t) (\BV_1(s_{t+1}) - \BV_2(s_{t+1})) | + \\&   \lambda | \sum_{s_{t-1}}\RPpi(s_{t-1} | s_t) (\BV_1(s_{t-1}) - \BV_2(s_{t-1}) ) |)\\
      &\leq \max_{s_t} \frac{\gamma}{1 + \lambda \gamma^2}  ( |  \max_{s_{t+1}}(\BV_1(s_{t+1}) - \BV_2(s_{t+1})) \sum_{s_{t+1}}\FPpi(s_{t+1} | s_t)  | + \\&   \lambda  |\max_{s_{t-1}} (\BV_1(s_{t-1}) - \BV_2(s_{t-1}) ) \sum_{s_{t-1}}\RPpi(s_{t-1} | s_t)  |)\\
       &\leq \max_{s_t} \frac{\gamma}{1 + \lambda \gamma^2} (1 + \lambda )|| \BV_1 - \BV_2 ||_\infty\\
\end{align*}

Now the above will be a contraction mapping if $\frac{\gamma (1 + \lambda)}{ 1 + \lambda \gamma^2} < 1$, wherein this  translates  to
\begin{align*}
    \frac{\gamma (1 + \lambda)}{ 1 + \lambda \gamma^2} &< 1\\
    \gamma + \gamma \lambda - \lambda \gamma^2 &< 1\\
    \lambda \gamma (1 - \gamma) & \leq 1 - \gamma\\
    \lambda &\leq \frac{1}{\gamma}.
\end{align*}
The above is always true, because $\gamma < 1$ and $\lambda \leq 1$, and hence $\lambda \leq \frac{1}{\gamma}$ (always).

\subsection{Rate of Convergence}
For the case of $\RV$ we can see that the rate of convergence is $\lambda\gamma$, which is better than the rate of convergence for $v$, which is simply $\gamma$.

As for the case of $\BV$, let us see the conditions under which we would have a better rate of convergence than $\gamma$, i.e., 
\begin{align*}
    \frac{ \cancel{\gamma} (1 + \lambda)}{1 + \lambda \gamma^2} & \leq \cancel{\gamma}\\
    \frac{ (1 + \lambda)}{1 + \lambda \gamma^2} & \leq 1\\
    1 + \lambda & \leq  1 + \lambda \gamma^2\\
    1 & \leq \gamma^2
\end{align*}
As we know that $\gamma < 1$, hence the above condition is never true, and so we have that $\frac{ \cancel{\gamma} (1 + \lambda)}{1 + \lambda \gamma^2} > \cancel{\gamma}$, and hence $\BV$ has a worse rate of convergence.

These rates should be taken with a grain of salt, as these are the worst-case rates and are not representative of what might happen in the actual learning. 

\section{More Update Equations}\label{appendix:update_equations}
% Now let us try to look and what can be the possible update equations for our methods, I will just dump multiple equations down and see which one might make more sense. \todo{Make these correct, use partial equations and use the correct value functions. }
In this section, we basically mix and match different value functions to derive various forms of update equations, which primarily differ in how the target is calculated for the corresponding TD errors.

\begin{align*}
    \shortintertext{\textbf{Equations for $\psi$}}
    % \triangle \psi_t =& \alpha ( { \color{black} R_t + \gamma v_{\theta} (s_{t+1})} - v_{\theta} (s_t)) \nabla_\psi \BV_\psi (s_t)\\
    \triangle \psi_t =& \alpha ({ \color{black} \RV_\phi (s_t) + R_t + \gamma v_{\theta} (s_{t+1})} - \BV_{\psi} (s_t)) \frac{\partial \BV_\psi(s_t)}{\partial \psi} \\
    \triangle \psi_t =& \alpha ( {\color{black}\RV_\phi (s_t) + v_{\theta} (s_{t})} - \BV_{\psi} (s_t)) \frac{\partial \BV_\psi(s_t)}{\partial \psi} \\
    \triangle \psi_t =& \alpha ( {\color{black} \frac{1}{1 + \gamma^2\lambda} (R_t (1 - \gamma^2 \lambda) + \gamma \BV(s_{t+1}) + \gamma \lambda \BV(s_{t+1}))} - \BV_{\psi} (s_t)) \frac{\partial \BV_\psi(s_t)}{\partial \psi} \\
    \shortintertext{\textbf{Equations for $\phi$}}
    \shortintertext{We can easily maintain a scalar of the previous reward values weighted through $\lambda \gamma$ }
    \triangle \phi_t =& \alpha ( {\color{black}G^{\gamma \lambda}_{0:t-1}} - \RV_\phi(s_t) ) \frac{\partial \RV_{\phi}(s_t)}{\partial \phi}\\
    \triangle \phi_t =& \alpha ( {\color{black} \lambda \gamma R_{t-1} + \lambda \gamma \RV_\phi (s_{t-1} )} - \RV_\phi(s_t) ) \frac{\partial \RV_{\phi}(s_t)}{\partial \phi}\\
    \triangle \phi_t =& \alpha ( {\color{black} \BV_\psi(s_t) -  v_\theta (s_{t} )} - \RV_\phi(s_t) ) \frac{\partial \RV_{\phi}(s_t)}{\partial \phi}\\
    \shortintertext{\textbf{Equations for $\theta$}}
    \triangle \theta_t =& \alpha ( {\color{black} R_t + \gamma v_\theta (s_{t+1}) } - v_\theta(s_t) ) \frac{\partial v_\theta (s_t)}{\partial \theta}\\
    \triangle \theta_t =& \alpha ( {\color{black} \BV_\psi(s_t) -  \RV_\phi (s_{t} )} - v_\theta(s_t) ) \frac{\partial v_\theta (s_t)}{\partial \theta}\\
    \triangle \theta_t =& \alpha ( {\color{black} \BV_\psi(s_t) - G^{\gamma \lambda}_{0:t-1})  } - v_\theta(s_t) ) \frac{\partial v_\theta (s_t)}{\partial \theta}\\
\end{align*}

\section{Experiments}
\subsection{Parameterization} \label{appendix:parameterization}
Figure \ref{fig:parameterization_all} provides all the different forms of parameterization present, i.e., BiTD-FR, BiTD-BiR, BiTD-FBi. 
\begin{figure*}[t]
\centering
\begin{subfigure}{.32\textwidth}
  \centering
  \includegraphics[width=0.95\linewidth]{./figures/BiTD.pdf}
 \caption{}
  % \caption{The intended update to the value of past states}
  \label{fig:parameterization_a}
\end{subfigure}%
\begin{subfigure}{.32\textwidth}
  \centering
  \includegraphics[width=0.95\linewidth]{./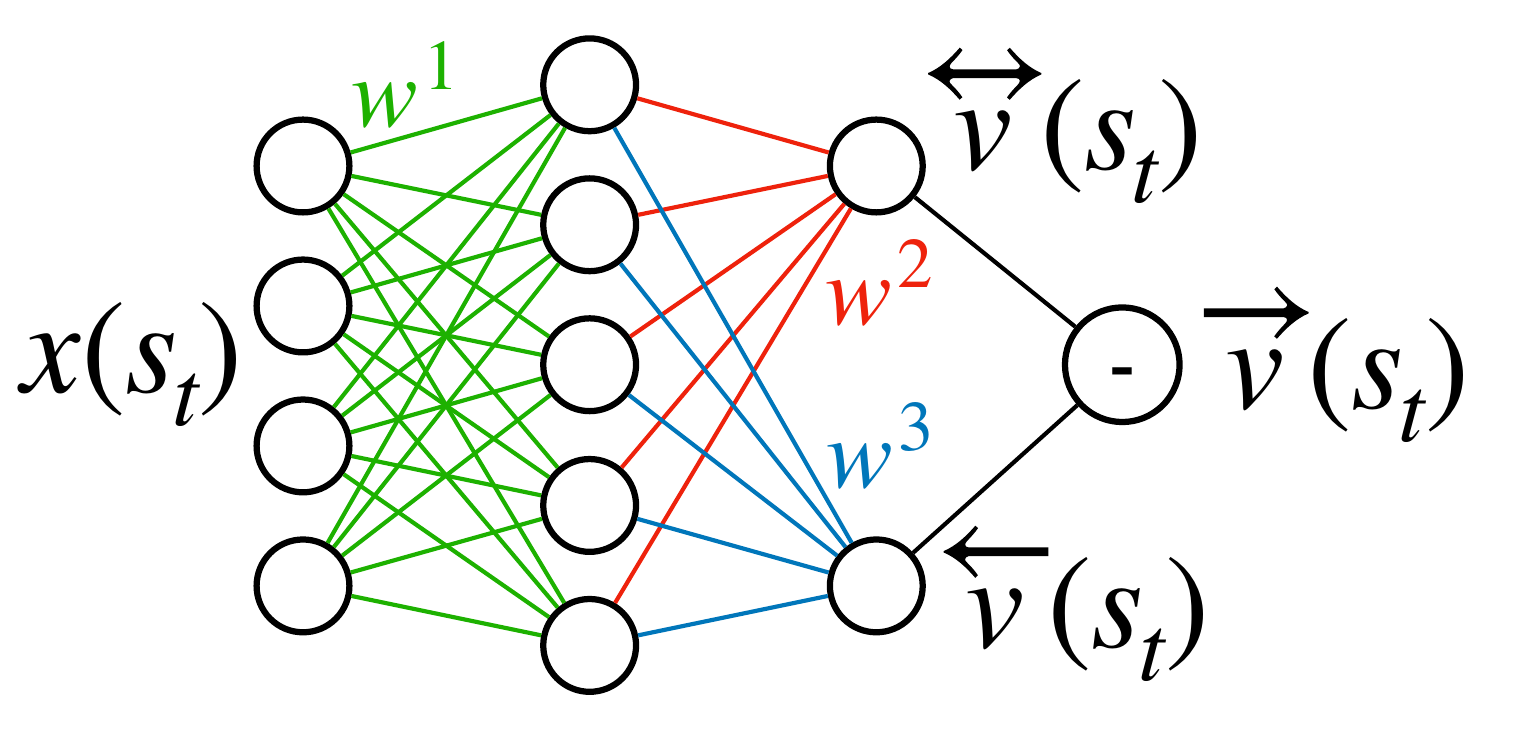}
  \caption{}
  \label{fig:parameterization_b}
\end{subfigure}
\begin{subfigure}{.32\textwidth}
  \centering
  \includegraphics[width=0.95\linewidth]{./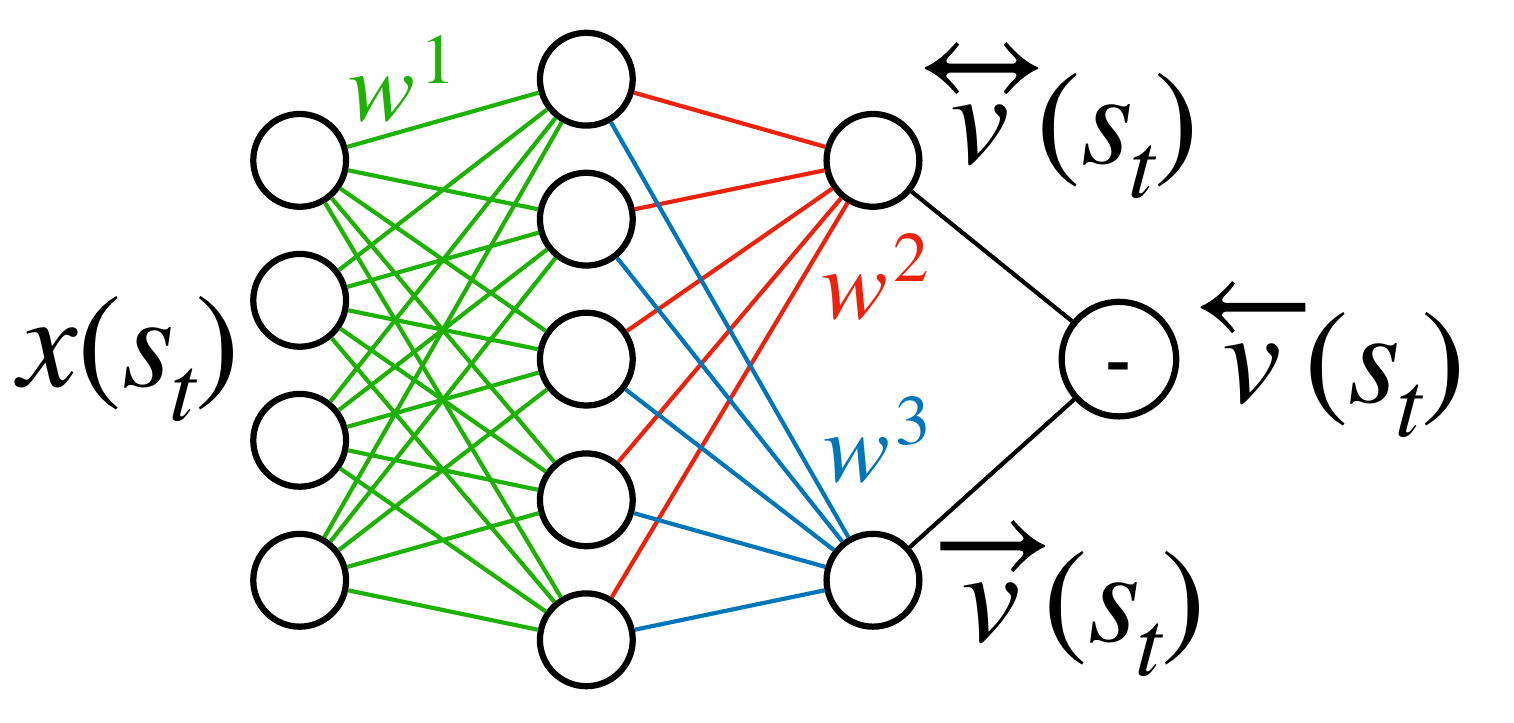}
  \caption{}
  % \caption{The possible update which can occur for past states because of non linear FA. }
  \label{fig:parameterization_c}
\end{subfigure}
\caption{Parameterizing the three value functions as explained in Section \ref{subsec:parameterization} }
\label{fig:parameterization_all}
\end{figure*}
\subsection{More Results}
In Figure \ref{apx:fig:experiment}, we provide the stderr for the learning curves, as well the hyperparameters sensitivity for different methods against $\alpha$. This is to discern if the improved performance in BiTD methods is simply not because of a better-chosen learning rate. To repeat, we used SGD as the optimizer with ReLU and a single hidden layer in the neural network.

\begin{figure}[!h]
  \centering
    \includegraphics[width=1.0\textwidth]{./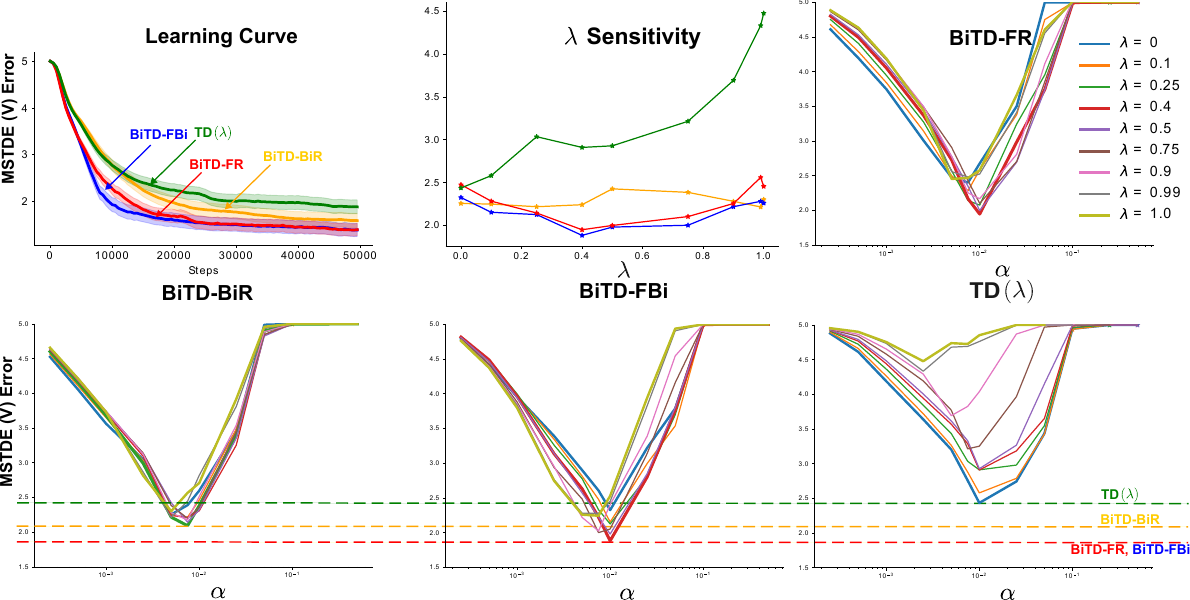}
  \caption{Extended results on the chain domain, all results are averaged over 100 random seeds. All curves have a y-axis for the MSTDE error for the forward value function.   (\textbf{Top Left}) The learning curve with stderr. \textbf{Top Middle} Effect of $\lambda$ on different methods. (repeated from the main paper). (\textbf{Top Right, Bottom}) $\alpha-\lambda$ curves, where x-axis corresponds to different step sizes and different curves correspond to the performance over $\alpha$ for a specific value of $\lambda$. 
  }
  
  \label{apx:fig:experiment}
  \end{figure}

\end{document}